%% file: main.tex
\documentclass[11pt]{yresearch}
\usepackage{amsmath,amssymb,amsfonts,amsthm}
\newtheorem{theorem}{Theorem}
\usepackage{enumitem}
\usepackage{algorithm}
\usepackage{algorithmic}

\usepackage{pifont}

\graphicspath{{figures/}}

\definecolor{tradeoff}{HTML}{2196F3}
\definecolor{redundant}{HTML}{F44336}
\definecolor{neutral}{HTML}{9E9E9E}

\title{\huge BenchScope: How Many Independent Signals Does Your Benchmark Provide?}

\author[1]{Tommy Sha\textsuperscript{*}}
\author[2]{Stella Zhao\textsuperscript{*}}
\affiliation[1]{Stony Brook University}
\affiliation[2]{University of Minnesota Twin Cities}
\correspondence{\email{tianming.sha@stonybrook.edu}, \email{zhao2052@umn.edu}}
\date{March 2026}

\contribution[\textsuperscript{*}]{Equal contribution; authors listed in alphabetical order.}

\abstract{AI evaluation suites often report many scores without checking whether those scores carry independent information.
We introduce \textit{Effective Dimensionality} (ED), the participation ratio of a centered benchmark-score spectrum, as a fast, population-conditional upper-bound diagnostic of measurement breadth.
Applied at per-instance granularity to 22 benchmarks across 8 domains and more than 8{,}400 model evaluations, ED reveals substantial redundancy: the six-score Open LLM Leaderboard behaves like roughly two effective measurement axes ($\text{ED} = 1.7$), BBH and MMLU-Pro are near-interchangeable ($\rho = 0.96$, stable across seven subpopulations), and measurement breadth varies more than $20\times$ across current benchmarks.
We show that relative ED rankings are stable under matched-dimension controls and that ED can flag redundant suite components, monitor performance-conditional compression, and guide benchmark maintenance.
Because binary spectra overestimate absolute latent dimensionality, we interpret ED as a screening statistic rather than a literal factor count and complement it with null, reliability, and saturation analyses.
We provide a 22-benchmark reference atlas and a four-step diagnostic workflow that benchmark maintainers can run with a score matrix and a few lines of code.}

\begin{document}
\mymaketitle

\input{sections/introduction}
\input{sections/method_landscape}
\input{sections/landscape}
\input{sections/redundancy_tradeoffs}
\input{sections/coverage}
\input{sections/aging}
\input{sections/semantics}
\input{sections/criteria}
\input{sections/related}
\input{sections/discussion}

\bibliographystyle{plainnat}
\bibliography{references}

\beginappendix
\input{appendix/data_details}
\input{appendix/additional}

\end{document}

%% file: sections/introduction.tex
\section{Introduction}
\label{sec:introduction}

AI benchmarks are proliferating faster than ever, yet there is no principled method to assess whether they actually provide independent information.
Benchmark designers list task categories and assume diversity; leaderboard operators average scores and assume comprehensiveness; practitioners compare composite rankings and assume informativeness.
None of these assumptions are tested.
The result: massive hidden redundancy, wasted evaluation compute, and rankings that mislead more than they inform.

Consider the Open LLM Leaderboard v2~\citep{open-llm-leaderboard-v2}, which reports six benchmark scores per model.
How many of those scores carry independent information?
A spectral analysis reveals that the suite behaves like roughly two effective measurement axes: $\text{ED} = 1.7$ out of a maximum of~6 (ranging from 1.49 to 2.20 across seven model subpopulations).
BBH and MMLU-Pro are nearly the same signal ($\rho = 0.96$, stable across all seven subpopulations tested, $\rho \in [0.88, 0.97]$), yet both are counted as separate benchmarks, double-weighting a single axis in every composite ranking.
Under random benchmark weightings, the champion changes 38\% of the time; the uniform-weight leader places 441st out of 4{,}576 on MATH.
The leaderboard is averaging near-redundant signals, producing a compromise ranking rather than a clean measure of any one capability.

This is not an isolated case.
Across the evaluation ecosystem, benchmark designers' claimed breadth systematically overstates what their benchmarks actually measure.
The Berkeley Function Calling Leaderboard (BFCL) advertises 20 function-calling categories, but they collapse to just 7 effective groups; \texttt{simple\_python}, \texttt{simple\_java}, and \texttt{simple\_javascript} test the same underlying skill ($r = 0.84$).
MMLU-Pro's 14 subject categories (biology, chemistry, economics, \ldots) yield a category-level ED of 1.1, nearly perfectly unidimensional despite spanning the sciences and humanities.
At the other extreme, BigCodeBench achieves $\text{ED} = 29$ under a single label (practical coding), though deeper validation reveals only ${\sim}8$--$9$ statistically significant dimensions even there (Section~\ref{sec:what-drives-ed}).
The designers' taxonomy is an unreliable guide to what a benchmark actually measures.

These examples all point to the same gap: there is no fast, computable diagnostic for how much independent measurement a benchmark suite provides.
To fill this gap, we introduce \textit{Effective Dimensionality} (ED), the participation ratio of a benchmark's centered score spectrum.
ED provides a population-conditional upper bound on measurement breadth: $\text{ED} = 1$ means all tasks rank models the same way; higher values indicate more independent axes of variation.
Because binary spectra systematically overestimate absolute latent dimensionality, ED is best interpreted as a screening statistic for redundancy rather than a literal factor count; we complement it with null-model, reliability, and saturation analyses when count-like precision is needed.
ED is closed-form, requires no threshold or permutation, and plugs directly into a maintainer workflow (Section~\ref{sec:guidelines}).

Applying ED at per-instance granularity to 22 benchmarks spanning 8 domains and more than 8{,}400 model evaluations (${\sim}700$ unique model families), we build the first cross-benchmark redundancy atlas of the AI evaluation ecosystem (Table~\ref{tab:reference}).
The atlas reveals a ${\sim}27\times$ range in raw measurement breadth---from BigCodeBench ($\text{ED} = 29$) to GAIA and EvalPlus ($\text{ED} \approx 1.2$)---with independent measurement capacity scarce across the entire ecosystem.
ED also reveals subtler structural phenomena.
Within a curated 188-model subset, MATH and MuSR are negatively correlated ($\rho = -0.64$), yet across all 4{,}576 models the correlation is positive ($\rho \approx +0.5$), a Simpson's paradox driven by model size as a confounder.
We prove (Theorem~\ref{thm:composite}) that no positive-weight composite can correlate above $0.42$ with both benchmarks within such a subpopulation.
ED is a property of the benchmark--population interaction, not of the benchmark alone, grounded in R\'{e}nyi entropy and random matrix theory and validated against IRT and five alternative dimensionality methods ($\rho = 0.98$; Section~\ref{sec:method}).

Despite growing concern about benchmark quality~\citep{raji2021ai,eriksson2025benchmarks,reuel2024betterbench}, prior work has been predominantly qualitative, identifying what can go wrong but not providing computable diagnostics for specific benchmarks.
Two recent studies have begun probing benchmark structure quantitatively: \citet{federiakin2025psychometric} applied Confirmatory Factor Analysis to IFEval and \citet{kearns2026structure} applied PCA to the Open LLM Leaderboard.
However, both operate at aggregate granularity (per-category or per-benchmark scores), and we show that this granularity inflates apparent dimensionality by up to $5\times$; only per-instance analysis cleanly separates genuine capability dimensions from task-difficulty variation.

Our contributions:
\begin{enumerate}[leftmargin=1.8em, itemsep=3pt, label=(\arabic*)]
    \item \textbf{ED: a fast, closed-form screening diagnostic for benchmark redundancy.}
    We define effective dimensionality via the participation ratio, connect it to R\'{e}nyi entropy and random matrix theory, and validate it against IRT and five alternative methods ($\rho = 0.98$ rank-order agreement on synthetic data; Appendix~\ref{sec:app-additional}).
    ED provides a population-conditional upper bound on measurement breadth that requires no thresholds, no permutations, and no iterative fitting.

    \item \textbf{A 22-benchmark redundancy atlas} (Table~\ref{tab:reference}) exposing a $>$20$\times$ range in measurement breadth and revealing that independent measurement capacity is scarce even for the broadest benchmarks.

    \item \textbf{Design principles and an actionable diagnostic workflow.}
    We identify that structural heterogeneity---not topic diversity---drives measurement breadth (Section~\ref{sec:coverage}), and distill this into a four-step diagnostic workflow (Section~\ref{sec:guidelines}) with three empirical heuristics that benchmark maintainers can apply with only a score matrix and a few lines of code.

    \item \textbf{Supporting theory and case studies.}
    A composite correlation ceiling (Theorem~\ref{thm:composite}); case studies on benchmark aging, ED-guided task selection, and compression cost ($\rho = 0.64$ across 8 benchmarks spanning 6 domains); and the first systematic quantification of conditional negative correlations across benchmark pairs.
\end{enumerate}

\noindent
\textbf{Evidence calibration.}
The redundancy and ranking-fragility results hold across all seven subpopulations tested; the conditional negative correlations are population-specific (Section~\ref{sec:redundancy-tradeoffs}).

\noindent We define ED and its theoretical foundations (Section~\ref{sec:method}), present the 22-benchmark atlas (Section~\ref{sec:landscape}), dissect the Open LLM Leaderboard as a case study (Section~\ref{sec:redundancy-tradeoffs}), investigate what drives ED and how it changes over time (Sections~\ref{sec:coverage}--\ref{sec:aging}), validate the semantic content of principal components (Section~\ref{sec:semantics}), and distill a practical diagnostic workflow (Section~\ref{sec:guidelines}).

\begin{figure}[htbp]
\centering
\includegraphics[width=\textwidth]{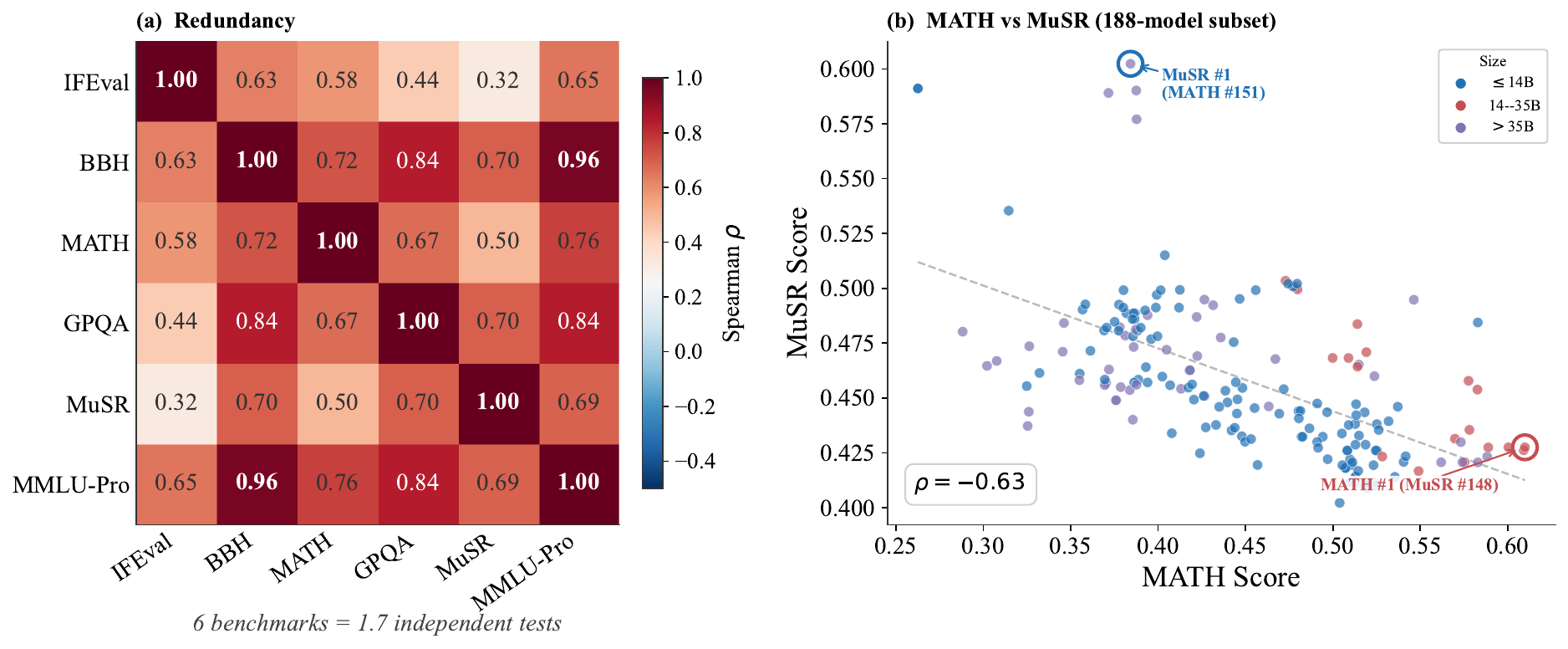}
\caption{Redundancy structure in the Open LLM Leaderboard v2.
(a)~The pairwise correlation matrix (full 4{,}576-model population) shows that the six-score suite behaves like roughly two effective measurement axes ($\text{ED} = 1.7$): BBH and MMLU-Pro are near-duplicates ($\rho = 0.96$) while other pairs are only weakly correlated (e.g., IFEval--MuSR: $\rho = 0.32$).
(b)~A subset-specific pattern ($n = 188$ only): within a curated subset, MATH and MuSR are negatively correlated ($\rho = -0.64$); in the full population, the correlation is positive ($\rho = 0.50$; see Figure~\ref{fig:tradeoffs}).}
\label{fig:teaser}
\end{figure}

%% file: sections/method_landscape.tex
\section{Effective Dimensionality: Definition and Theory}
\label{sec:method}

Benchmarks are designed to evaluate models, but the arrow can be reversed.
Given a population of models, their collective pass/fail patterns on a benchmark's tasks reveal the benchmark's latent structure: how many independent axes of capability variation it actually measures.
We formalize this reversal: a binary pass/fail matrix, spectrally decomposed, yields a single scalar that answers this question.

\subsection{Definition}
\label{sec:ed-definition}

Operationally, we encode pass/fail patterns in a matrix $M \in \{0,1\}^{T \times N}$, with $T$ tasks as rows and $N$ models as columns.
We task-center $M$ by subtracting each task's mean pass rate across models, removing trivial difficulty effects before computing the SVD, which yields singular values $\sigma_1 \geq \sigma_2 \geq \cdots \geq 0$.
The resulting covariance structure captures how models differ from each other on each task, rather than reflecting which tasks are globally easy or hard.
The Effective Dimensionality (ED) is the participation ratio of the squared spectrum:
\begin{equation}
  \boxed{\;\text{ED} = \frac{\bigl(\sum_i \sigma_i^2\bigr)^2}{\sum_i \sigma_i^4}\;}
  \label{eq:ed}
\end{equation}
When a single principal component captures all variance, $\text{ED} = 1$; when $K$ components share variance equally, $\text{ED} = K$.
The participation ratio originates in condensed-matter physics~\citep{thouless1974,bell1970} and computational neuroscience~\citep{gao2017,stringer2019}; to our knowledge this is its first application to AI benchmark analysis.

\textbf{Interpretation.} ED is a population-conditional upper bound on measurement breadth, not a literal count of independent capabilities.
``SWE-bench Verified $\text{ED} = 7.9$'' means that given the current 134 agents, the benchmark's discriminative structure spans at most about eight effective axes; permutation-null and split-half analyses (Section~\ref{sec:what-drives-ed}) can refine this to a count of statistically significant or reproducible dimensions.
Like test dimensionality in psychometrics~\citep{embretson2000irt}, ED is a property of the benchmark--population interaction and should be recomputed as the model ecosystem evolves.

\begin{figure}[htbp]
\centering
\includegraphics[width=\textwidth]{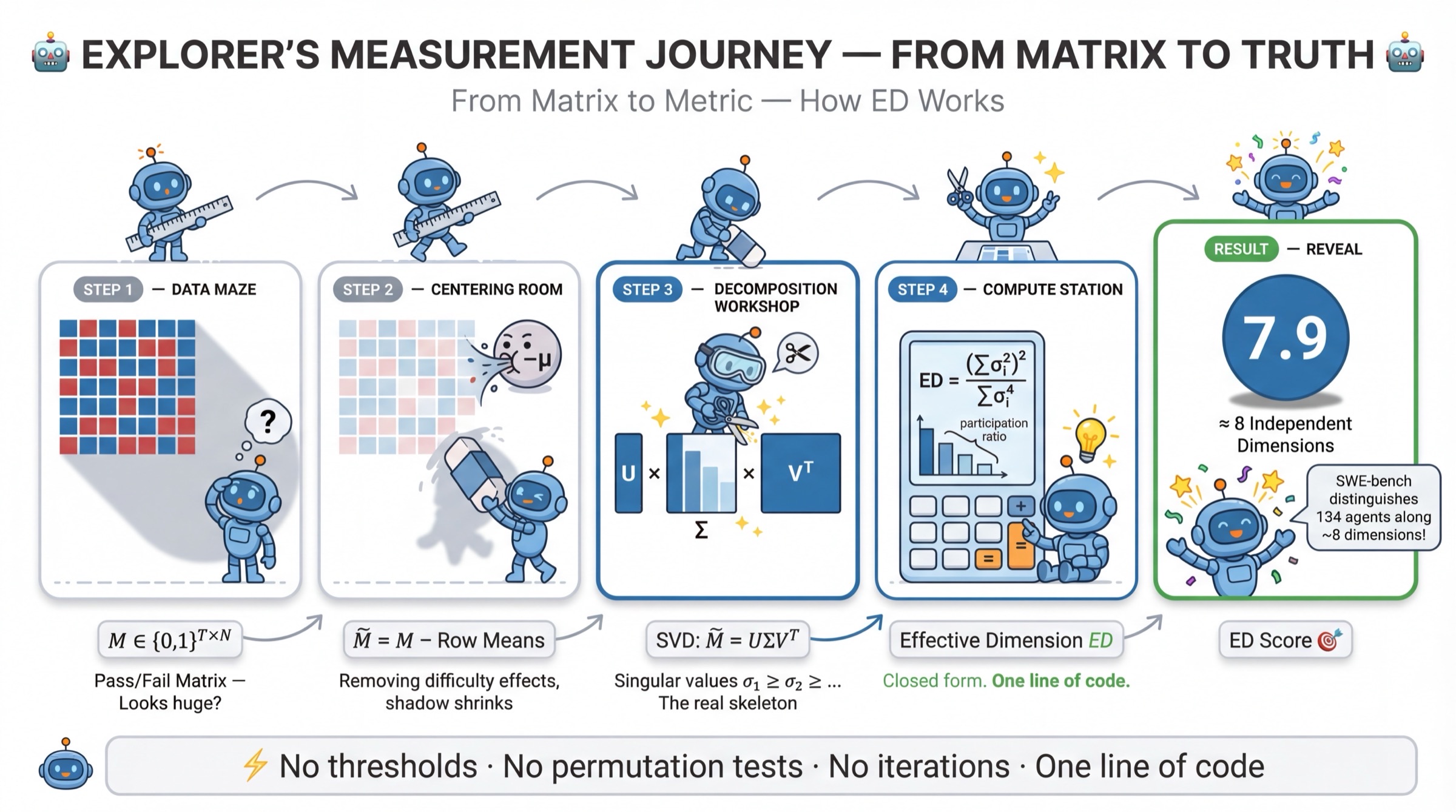}
\caption{From matrix to metric: ED computation pipeline.
A binary pass/fail matrix is task-centered, spectrally decomposed via SVD, and summarized by the participation ratio, a single closed-form scalar requiring no thresholds, no permutations, and no iterations.}
\label{fig:pipeline}
\end{figure}

\subsection{Information-Theoretic Interpretation}
\label{sec:info-theory}

ED admits a natural information-theoretic interpretation.
Define the normalized eigenvalue spectrum $\lambda_i = \sigma_i^2 \big/ \sum_j \sigma_j^2$ as a probability distribution over principal components.
The participation ratio corresponds to the R\'{e}nyi entropy of order~2:
\begin{equation}
  \text{ED} = \exp\!\bigl(H_2(\boldsymbol{\lambda})\bigr) = \frac{1}{\sum_i \lambda_i^2}\,,
  \label{eq:ed-renyi}
\end{equation}
where $H_2(\boldsymbol{\lambda}) = -\log\sum_i \lambda_i^2$ is the collision entropy~\citep{renyi1961}, a natural measure of how many components effectively participate in representing model variation.

The R\'{e}nyi connection provides two insights.
First, ED is bounded: $1 \leq \text{ED} \leq \min(T,N)$, because the SVD of a $T \times N$ matrix produces at most $\min(T,N)$ nonzero singular values.
When $N \gg T$ (e.g., GAIA: $3 \times 3{,}098$), the upper bound is $T = 3$, so low ED in such cases may reflect insufficient task granularity rather than genuinely one-dimensional measurement.
Second, because $H_2 \leq H_1$ for any distribution, ED $= \exp(H_2)$ is always less than or equal to $\exp(H_1)$, the Shannon-entropy-based effective rank~\citep{roy2007effective}.
ED is therefore more sensitive to dominant components than its Shannon counterpart, an appropriately calibrated default for benchmark diagnostics, where false positives in dimensionality inflate confidence in suite breadth.

\subsection{Random Matrix Theory Baseline}
\label{sec:rmt-baseline}

To calibrate whether an observed ED is genuinely low, we need a null baseline.
Under the hypothesis that all pass/fail entries are independent (no latent capability structure), the Marchenko--Pastur law~\citep{marchenko1967} governs the eigenvalue distribution.
For a $T \times N$ matrix with i.i.d.\ centered entries, applying the participation-ratio formula to the expected trace moments~\citep{marchenko1967,bai2010spectral} yields:
\begin{equation}
  \text{ED}_{\text{null}} \;=\; \frac{TN}{T + N}\,.
  \label{eq:ed-null}
\end{equation}
This closed-form expression depends only on matrix dimensions, not on entry variance (which cancels).
Because actual pass/fail matrices are column-centered, heterogeneous, and binary, this formula serves as an analytical baseline rather than an exact expected value; empirically, it agrees within 10\% of simulation-based permutation nulls across all 22 benchmarks.

Table~\ref{tab:ed-rmt} applies this formula to three benchmarks spanning the ED range.
For SWE-bench Verified ($T{=}500$, $N{=}134$), $\text{ED}_{\text{null}} = 105.7$ while $\text{ED}_{\text{obs}} = 7.9$; the observed benchmark retains only 7.5\% of the dimensionality expected under random structure.
Across all 22 benchmarks, the ratio $\text{ED}_{\text{obs}} / \text{ED}_{\text{null}}$ ranges from $0.015$ (BFCL) to $0.81$ (MuSR), with a median of $0.21$; every benchmark is far more structured than a random matrix.
Bootstrap resampling (1{,}000 iterations) yields 95\% confidence intervals with widths of $0.7$--$3.2$ ED units, confirming statistical stability.

\begin{table}[htbp]
\centering
\caption{Observed ED vs.\ Marchenko--Pastur null (Eq.~\ref{eq:ed-null}). The ratio $\text{ED}_{\text{obs}}/\text{ED}_{\text{null}}$ quantifies how much more structured than random each benchmark is, on a 0--1 scale.}
\label{tab:ed-rmt}
\small
\begin{tabular}{lccccc}
\toprule
\textbf{Benchmark} & $T$ & $N$ & $\textbf{ED}_{\textbf{null}}$ & $\textbf{ED}_{\textbf{obs}}$ & \textbf{Ratio} \\
\midrule
SWE-bench Verified    & 500  & 134  & 105.7 & 7.9  & 0.075 \\
Open LLM (43-sub)     & 43   & 188  & 35.0  & 4.5  & 0.129 \\
BigCodeBench-complete  & 1140 & 153  & 134.9 & 28.9 & 0.214 \\
\bottomrule
\end{tabular}
\end{table}

\paragraph{Robustness.}
\label{sec:robustness}
Three concerns merit attention (full details in Appendix~\ref{sec:app-additional}).
\textit{Centering}: alternative centering schemes shift absolute ED by up to $4\times$ but leave benchmark rankings invariant (Table~\ref{tab:centering}).
\textit{Binary attenuation}: Pearson-based ED overestimates true dimensionality by ${\sim}2\times$ (tetrachoric-corrected $\text{ED}_\text{tet} \approx 0.5\times$ Pearson-ED); we report both in Table~\ref{tab:reference}, using Pearson-ED as the primary metric for its rank-preserving tractability.
Synthetic validation confirms rank-order recovery ($\rho = 0.98$ with true dimensionality).
\textit{Agreement with alternatives}: IRT parallel analysis, broken-stick, Kaiser criterion, and explained-variance thresholds all produce the same benchmark ordering (mean pairwise $\rho = 0.71$; Table~\ref{tab:ed-vs-pa}).

%% file: sections/landscape.tex
\section{A 22-Benchmark Landscape}
\label{sec:landscape}

If benchmark designers cannot rely on their own category taxonomies to gauge measurement breadth, what does the evaluation ecosystem actually look like when measured empirically?
We apply ED to 22 benchmarks spanning 8 evaluation domains and more than 8{,}400 models, building the first cross-benchmark redundancy atlas.
The result reveals a wide range of measurement breadth: benchmarks that appear equally comprehensive by design-time criteria differ by more than $20\times$ in effective dimensionality.


\begin{table}[!ht]
\centering
\caption{Effective dimensionality of 22 benchmarks. ED is the Pearson-based upper bound; $\text{ED}_\text{tet}$ is the tetrachoric-corrected estimate (${\approx}0.5\times$ ED). \textbf{Top}: unconstrained. \textbf{Bottom}: shape-constrained ($\dagger$, $\min(T,N) \leq 6$).
Relative ordering survives matched-dimension controls ($\rho = 1.0$ at $300 \times 50$).}
\label{tab:reference}
\footnotesize\setlength{\tabcolsep}{4pt}
\begin{tabular}{llcrrrr}
\toprule
\textbf{Benchmark} & \textbf{Domain} & \textbf{Size ($T\!\times\!N$)} & \textbf{ED} & \textbf{ED$_\text{tet}$} & \textbf{PC1\%} & \textbf{ED/ED$_\text{n}$} \\
\midrule
\multicolumn{7}{l}{\textit{Unconstrained ($\min(T,N) > 6$):}} \\
BigCodeBench-instruct       & Coding (prac.)  & $1140\!\times\!126$    & 33.0 & ${\sim}$17 & 14.7 & .291 \\
BigCodeBench-complete       & Coding (prac.)  & $1140\!\times\!153$    & 28.9 & ${\sim}$16 & 16.5 & .214 \\
BigCodeBench-hard-comp.     & Coding (prac.)  & $148\!\times\!199$     & 18.5 & ${\sim}$10 & 20.6 & .218 \\
BigCodeBench-hard-inst.     & Coding (prac.)  & $148\!\times\!173$     & 18.3 & ${\sim}$10 & 20.3 & .229 \\
SWE-bench Verified          & Coding (repo)   & $500\!\times\!134$     & 7.9  & ${\sim}$4 & 34.4 & .075 \\
MMLU-Pro                    & Knowledge       & $12257\!\times\!47$    & 6.8  & ${\sim}$4 & 36.4 & .145 \\
LiveCodeBench               & Coding (cont.)  & $1055\!\times\!52$     & 6.2  & ${\sim}$3 & 37.0 & .126 \\
LiveBench                   & Comprehensive   & $494\!\times\!195$     & 5.9  & ${\sim}$3 & 33.5 & .042 \\
Open LLM (43-sub)           & LLM eval        & $43\!\times\!188$      & 4.5  & ${\sim}$2 & 41.2 & .129 \\
BBH                         & Reasoning       & $24\!\times\!188$      & 4.4  & ${\sim}$2 & 41.8 & .206 \\
SWE-bench Test              & Coding (repo)   & $1839\!\times\!24$     & 3.8  & ${\sim}$2 & 49.6 & .161 \\
SWE-bench Lite              & Coding (repo)   & $299\!\times\!84$      & 3.7  & ${\sim}$2 & 51.5 & .056 \\
SWE-bench Multilingual      & Coding (multi.) & $301\!\times\!13$      & 2.9  & ${\sim}$2 & 57.6 & .233 \\
SWE-bench Multimodal        & Coding (GUI)    & $301\!\times\!12$      & 2.0  & ${\sim}$1 & 69.7 & .173 \\
BFCL                        & Func.\ call.    & $1000\!\times\!109$    & 1.5  & ${\sim}$1 & 81.0 & .015 \\
\midrule
\multicolumn{7}{l}{\textit{Shape-constrained ($\min(T,N) \leq 6$; ED may underestimate true dimensionality):}} \\
MuSR$^\dagger$              & Reasoning       & $3\!\times\!188$       & 2.4  & ${\sim}$1 & 49.4 & .813 \\
AgentBench$^\dagger$        & General agent   & $5\!\times\!25$        & 2.2  & ${\sim}$1 & 60.0 & .528 \\
Open LLM (6-bench)$^\dagger$ & LLM eval       & $6\!\times\!4576$      & 1.7  & ${\sim}$1 & 75.6 & .284 \\
GPQA$^\dagger$              & Science         & $3\!\times\!188$       & 1.4  & ${\sim}$1 & 83.3 & .474 \\
MATH$^\dagger$              & Mathematics     & $7\!\times\!188$       & 1.3  & ${\sim}$1 & 88.5 & .193 \\
GAIA$^\dagger$              & General agent   & $3\!\times\!3098$      & 1.2  & ${\sim}$1 & 91.1 & .400 \\
EvalPlus$^\dagger$          & Coding (func.)  & $4\!\times\!125$       & 1.2  & ${\sim}$1 & 90.8 & .310 \\
\bottomrule
\end{tabular}
\end{table}

Table~\ref{tab:reference} provides the first cross-benchmark ED reference atlas covering 22 benchmarks across 8 domains and 8{,}400+ models, calibrated to the current evaluation ecosystem.
These benchmarks use different model populations, matrix sizes, and task granularities; the sort order is for convenience and the absolute values are population-conditional.
The $\text{ED}_\text{tet}$ column provides a tighter (lower) estimate correcting for binary attenuation, and $\text{ED}/\text{ED}_\text{n}$ normalizes by the Marchenko--Pastur null (Eq.~\ref{eq:ed-null}), providing a scale-free comparison that accounts for matrix shape.

\paragraph{Three tiers.}
\textbf{Low ED ($<$4):} BFCL, SWE-bench Test/Lite/Multilingual/Multimodal have homogeneous task structure dominated by a single ability axis.
\textbf{Medium ED (4--8):} SWE-bench Verified, MMLU-Pro, LiveCodeBench, LiveBench, and BBH have moderate multi-dimensional discrimination.
\textbf{High ED ($>$18):} BigCodeBench stands alone, and its lead survives matched-dimension control ($\text{ED} = 19.7$ at $300 \times 50$), confirming that it genuinely measures more independent capabilities.
Benchmarks marked $\dagger$ have $\min(T,N) \leq 6$, imposing a hard ceiling on ED; their reported values serve as lower bounds on true dimensionality, and we flag these cases explicitly in Table~\ref{tab:reference}.

\paragraph{Broad vs.\ focused benchmarks.}
Under ED, a broad comprehensive benchmark is one that discriminates models along many independent axes (high ED).
BigCodeBench ($\text{ED} = 29$) leads our corpus: 93\% of its task pairs are weakly correlated ($|r| < 0.3$), suggesting genuinely diverse measurement.
However, deeper validation (Section~\ref{sec:what-drives-ed}) reveals that only ${\sim}8$--$9$ of these dimensions exceed the permutation null; even the best benchmark in our corpus has far fewer real dimensions than its raw ED suggests.
At the other extreme, BFCL ($\text{ED} = 1.5$) advertises 20 categories, yet a single ability axis dominates 81\% of the variance.
For focused evaluations (e.g., MATH, $\text{ED} = 1.3$), low ED is a design strength: the benchmark measures one thing well.
The problem arises when a low-ED benchmark is used as if it were comprehensive, or when a benchmark suite aggregates multiple low-ED components that happen to measure the same axis.
The deeper lesson: \textbf{independent measurement capacity is scarce across the entire evaluation ecosystem}, even for benchmarks that appear richly multi-dimensional.

\begin{figure}[htbp]
\centering
\includegraphics[width=\textwidth]{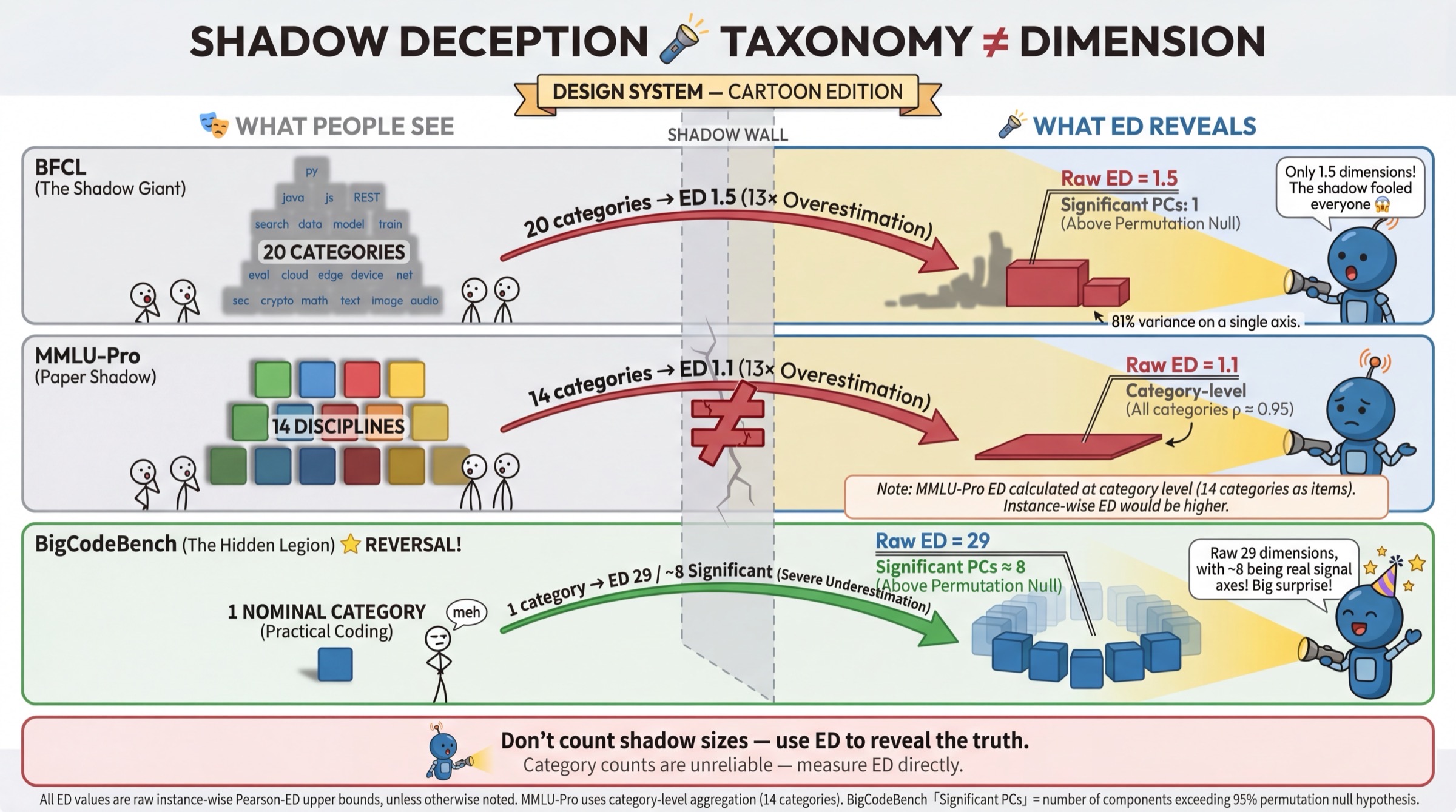}
\caption{Taxonomy $\neq$ measurement breadth.
All three examples use per-instance ED as the common metric.
BFCL's 20 categories yield per-instance ED\,=\,1.5; MMLU-Pro's 14 disciplines yield category-level ED\,=\,1.1; BigCodeBench's single label yields ED\,=\,29 (raw), of which ${\sim}8$ PCs exceed the permutation null.
Category counts can overstate or understate empirical breadth by an order of magnitude.}
\label{fig:taxonomy}
\end{figure}

\paragraph{Robustness of the ranking.}
To verify that the rank ordering is a benchmark property rather than a population artifact, we performed two controls.
First, saturation curves on all six per-instance benchmarks show that five of six are above 86\% of their asymptotic $\text{ED}_\infty$; only BigCodeBench is below 90\% ($\text{ED}_\infty = 35.4$, $n_{1/2} = 35$), consistent with its richer latent structure requiring more models to reveal.
Second, we subsampled all six benchmarks to a matched $300 \times 50$ matrix (30 bootstrap trials): the rank ordering is perfectly preserved ($\rho = 1.0$).
BigCodeBench's ED drops from 28.9 to 19.7 at matched dimensions, SWE-bench from 7.9 to 7.5, and BFCL from 1.5 to 1.5. Absolute values change with population size, but the ordering BigCodeBench $\gg$ SWE-bench $>$ LiveCodeBench $>$ BFCL is invariant across all benchmarks tested.\footnote{The matched-dimension control was run on six benchmarks with per-instance pass/fail data, including IFEval and AlpacaEval (analyzed separately from Table~\ref{tab:reference}; see Section~\ref{sec:ed-greedy}).}
Absolute ED values are population-conditional; relative comparisons across benchmarks are robust.

\paragraph{Granularity matters.}
One methodological warning: analysis granularity systematically affects results.
BFCL at per-category granularity shows PC2$= 26\%$; at per-instance granularity it shrinks to $5\%$, a fivefold drop that prior aggregate-level analyses~\citep{kearns2026structure,federiakin2025psychometric} may share.
All within-benchmark analyses here use per-instance data; the suite-level analysis in Section~\ref{sec:redundancy-tradeoffs} necessarily uses aggregate scores.
For the 7 continuous-score benchmarks (of 22), we binarize at $0.5$; absolute ED varies up to $2.5\times$ with threshold, but tier placement is robust.
Missing entries ($<$5\%) are imputed with column means.

\begin{figure}[htbp]
\centering
\includegraphics[width=\textwidth]{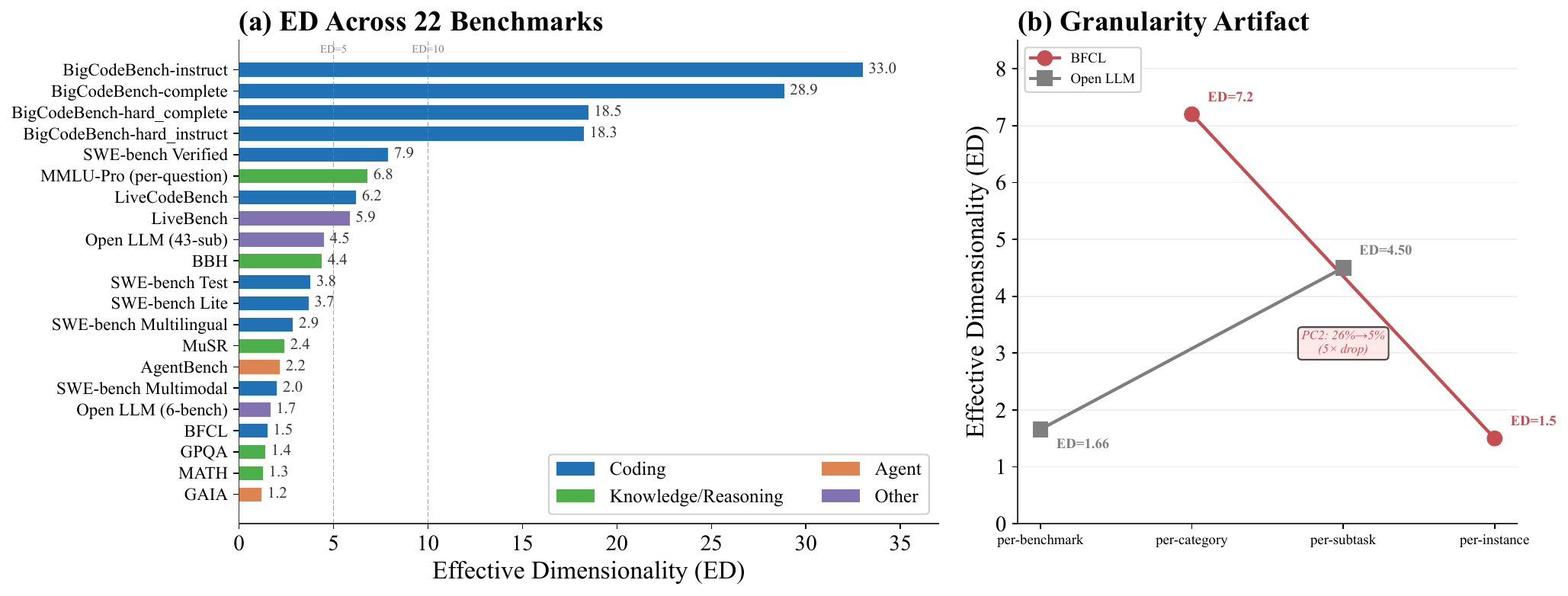}
\caption{ED across 22 benchmarks, sorted by descending ED. BigCodeBench ($\text{ED}=29$) spans more than $20\times$ the raw dimensionality of GAIA and EvalPlus ($\text{ED}=1.2$). The variation is not explained by benchmark size alone; task-set design and matrix shape both contribute (Section~\ref{sec:what-drives-ed}).}
\label{fig:landscape}
\end{figure}

%% file: sections/redundancy_tradeoffs.tex
\section{Case Study: The Open LLM Leaderboard v2}
\label{sec:redundancy-tradeoffs}

We now examine the Open LLM Leaderboard v2~\citep{open-llm-leaderboard-v2} in detail.
It evaluates models on six benchmarks (IFEval, BBH, MATH, GPQA, MuSR, MMLU-Pro); across 4{,}576 models, the $6 \times 6$ aggregate score matrix yields $\text{ED} = 1.66$.
At finer granularity (43 per-subtask scores, 188 models), $\text{ED} = 4.5$, higher but qualitatively unchanged: severe redundancy.

\subsection{Pervasive Redundancy}
\label{subsec:redundancy}

A suite-level ED of 1.66 implies that some of the six scores must carry highly overlapping information.
The clearest overlap is between BBH and MMLU-Pro, whose Spearman correlation is $0.96$ (95\% CI $[0.95,\;0.97]$).
Within this model population, reporting both largely duplicates the same ranking signal.
The BBH--MMLU-Pro redundancy is robust across all seven subpopulations tested (Qwen, Llama, open-source, and size-stratified partitions), with Spearman $\rho$ ranging from $0.883$ (${\geq}$30B) to $0.971$ (Qwen family), confirming this as a fundamental property of the benchmark pair, not a population artifact.

A leave-one-out experiment quantifies the asymmetry: removing MMLU-Pro changes ED by less than $0.02$, confirming that everything it measures is already captured by the remaining five benchmarks.
Removing IFEval changes ED by $0.30$, the largest shift of any single benchmark, making it the only irreplaceable member of the suite.
IFEval is irreplaceable precisely because it has the weakest correlations with the other benchmarks, contributing the most independent information.

Define the information density of a $k$-benchmark suite as $\text{ID} = \text{ED}/k$.
For the Open LLM Leaderboard v2, $\text{ID} = 1.7/6 = 0.28$: only 28\% of the reported scores carry independent information.
The remaining $6 - 1.7 = 4.3$ scores are redundant with information already captured.

\subsection{Ranking Fragility}
\label{subsec:fragility}

The redundancy has concrete consequences for composite rankings.
The composite ranking double-weights redundant pairs---BBH and MMLU-Pro each count as one signal but carry the same information---and the resulting ranking is fragile under perturbation.

\paragraph{Counterfactual analysis.}
Removing MMLU-Pro (the most redundant benchmark, $\rho = 0.96$ with BBH) yields Kendall $\tau = 0.946$ versus the full-suite ranking. High overall, but the champion changes: the leader under six benchmarks drops to fourth, replaced by a Qwen2.5-72B variant.
Nine of the top-10 models remain in the top 10, but 1{,}477 models shift by more than 100 ranks.
Keeping only three maximally informative benchmarks (IFEval, MATH, MuSR, chosen for mutual independence) yields $\tau = 0.837$ and only 5 of the original top-10 survive.

\paragraph{Weight sensitivity.}
We sampled 10{,}000 random Dirichlet weight vectors over the six benchmarks.
With $\alpha = 1$ (uniform Dirichlet), the champion changes in 38\% of trials, and 23 distinct models reach the top spot.
The fragility is sensitive to weight concentration: at $\alpha = 10$ (weights clustered near uniform), the rate drops to 12\%; at $\alpha = 0.1$ (extreme weight diversity), it rises to 63\%.
Even under moderate perturbation ($\alpha = 5$), the champion changes 19\% of the time; the top spot is a weighting convention, not a robust property of the benchmark suite.
Perhaps most strikingly, the composite leader places 441st on MATH (out of 4{,}576): it is optimal under equal weighting, but far from the best at any individual capability.

An exhaustive subset search confirms that four benchmarks---IFEval, BBH, MATH, and MMLU-Pro---reproduce the full-suite ranking better than any other 4-benchmark combination ($\tau = 0.943$), while the worst 4-combination achieves only $\tau = 0.764$.
The choice of which benchmarks to include affects the ranking more than the choice of how to weight them.

\subsection{Conditional Negative Correlations}
\label{subsec:conditional-correlations}

The preceding analyses are properties of the full 4{,}576-model population.
The analysis below concerns a curated 188-model subset (all models with publicly available per-subtask scores) that overrepresents well-known models and underrepresents niche checkpoints.
The negative correlations reported below are specific to this subset; all 15 pairs are positive in the full population.

Within this subset, the Spearman rank correlation between MATH and MuSR is $\rho = -0.635$, with a 95\% bootstrap confidence interval of $[-0.714,\;-0.540]$ (10{,}000 iterations).
Partial $\rho$ controlling for parameter count is $-0.625$, virtually unchanged.
Of the 15 benchmark pairs, 6 exhibit significant negative correlations (CI entirely below~0), 2 are redundant (CI entirely above~0.5), and 7 are mixed or weak (Table~\ref{tab:tradeoffs}).

\begin{table}[htbp]
\centering
\caption{Pairwise correlations from the Open LLM Leaderboard v2 ($n = 188$ curated models).
\textbf{Negative}: 95\% bootstrap CI entirely below 0.
\textbf{Redundant}: CI entirely above 0.5.
$\rho$ and CI are unconditional Spearman correlations; partial $\rho$ controls for log parameter count.
These correlations are specific to this 188-model subset; all 15 pairs are positive in the full 4{,}576-model population.}
\label{tab:tradeoffs}
\small
\begin{tabular}{lcccc}
\toprule
\textbf{Benchmark Pair} & $\boldsymbol{\rho}$ & \textbf{95\% CI} & \textbf{Partial $\boldsymbol{\rho}$} & \textbf{Type} \\
\midrule
MATH $\times$ MuSR        & $-0.635$ & $[-0.714,\;-0.540]$ & $-0.625$ & Negative \\
GPQA $\times$ MATH         & $-0.340$ & $[-0.470,\;-0.200]$ & $-0.584$ & Negative \\
IFEval $\times$ MuSR       & $-0.310$ & $[-0.440,\;-0.170]$ & $-0.611$ & Negative \\
IFEval $\times$ MMLU-Pro   & $-0.150$ & $[-0.290,\;-0.010]$ & $-0.479$ & Negative \\
GPQA $\times$ IFEval       & $-0.270$ & $[-0.410,\;-0.130]$ & $-0.568$ & Negative \\
BBH $\times$ MATH          & $-0.210$ & $[-0.350,\;-0.060]$ & $-0.022$ & Negative \\
\midrule
GPQA $\times$ MuSR         & $+0.690$ & $[+0.600,\;+0.770]$ & & Redundant \\
BBH $\times$ MMLU-Pro      & $+0.960$ & $[+0.950,\;+0.970]$ & & Redundant \\
\bottomrule
\end{tabular}
\end{table}

\paragraph{Population-level sign reversal.}
The MATH--MuSR negative correlation reverses at the population level.
Across the full 4{,}576-model Leaderboard, the unconditional MATH--MuSR Spearman correlation is $\rho \approx +0.5$; after controlling for log model size (3{,}502 models with extractable parameter counts), all 15 pairs remain positive, including MATH$\times$MuSR (partial $\rho = +0.32$).
The population-level sign reversal is largely explained by differences in size composition between the curated 188-model subset and the full Leaderboard.
Within the curated subset, however, the negative association persists after size control (partial $\rho = -0.625$), suggesting additional selection or training-style effects beyond model scale.
Size-stratified analysis quantifies the within-size effect: among Large models (14--35B, $n = 111$), $\rho = -0.682$; among XL models ($>$35B, $n = 50$), $\rho = -0.298$.
The attenuation with scale is consistent with larger models having more capacity to cover multiple objectives.

\paragraph{Composite scoring limits under negative correlation.}
When two benchmarks are negatively correlated within a subpopulation, any positive-weight composite must compromise: tilting toward one necessarily tilts away from the other.
We make this intuition precise with a result that depends only on the pairwise correlation $\rho$, not on ED.

\begin{theorem}[Composite correlation ceiling]
\label{thm:composite}
Let $s_1, s_2$ be standardized benchmark scores with Pearson correlation $\rho$.
For any composite $c = w_1 s_1 + w_2 s_2$ with $w_1, w_2 > 0$:
\begin{equation}
  \max_{w_1, w_2 > 0}\;\min\!\big(\mathrm{corr}(c, s_1),\;\mathrm{corr}(c, s_2)\big) \;=\; \sqrt{\frac{1 + \rho}{2}}\,.
  \label{eq:composite-ceiling}
\end{equation}
The maximum is attained at equal weights $w_1 = w_2$.
\end{theorem}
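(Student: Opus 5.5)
Since $\mathrm{corr}(c,s_i)$ is invariant under positive rescaling of $(w_1,w_2)$, I will normalize to $w_1 = \cos\theta$, $w_2 = \sin\theta$ with $\theta \in (0,\pi/2)$, or equivalently set $w_1 = 1-t$, $w_2 = t$ with $t\in(0,1)$. Using $\mathrm{Var}(s_i)=1$ and $\mathrm{Cov}(s_1,s_2)=\rho$, I compute directly
\[
\mathrm{corr}(c,s_1) = \frac{w_1 + \rho w_2}{\sqrt{w_1^2 + 2\rho w_1 w_2 + w_2^2}},\qquad
\mathrm{corr}(c,s_2) = \frac{w_2 + \rho w_1}{\sqrt{w_1^2 + 2\rho w_1 w_2 + w_2^2}}.
\]
The two correlations share a denominator $D(w)>0$ (assuming $\rho>-1$; the degenerate case $\rho=-1$ gives $c$ proportional to $s_1$ at equal weights and is handled separately, with value $0$ matching the formula by a limiting or convention argument). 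Substituting $w_1=w_2$ gives both correlations equal to $(1+\rho)/\sqrt{2+2\rho} = \sqrt{(1+\rho)/2}$, so the ceiling is attained at equal weights.

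For the upper bound, I add the two correlations. The sum equals $(1+\rho)(w_1+w_2)/D(w)$. Since $\min(a,b)\le (a+b)/2$, it suffices to show
\[
(1+\rho)(w_1+w_2) \le 2\sqrt{\tfrac{1+\rho}{2}}\,D(w),
\]
that is, $(1+\rho)(w_1+w_2)^2 \le 2(w_1^2+2\rho w_1w_2+w_2^2)$. Expanding, this reduces to $(1-\rho)(w_1-w_2)^2 \ge 0$, which holds because $\rho\le 1$. Equality requires $w_1=w_2$ when $\rho<1$, and also equality in $\min\le$ average, which again holds at $w_1=w_2$. This establishes the maximum and its attainment. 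When $\rho=1$ every composite attains $1$, which is consistent with the formula, so uniqueness is claimed only for $\rho<1$.

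The main obstacle is small: correctly handling the edge cases $\rho=\pm1$, where $D$ may vanish or the maximizer is non-unique, and confirming that the supremum over the open set $w_1,w_2>0$ is actually attained. It is attained because $w_1=w_2$ lies in the interior. Finally, I will note the numerical instance: $\rho=-0.64$ gives $\sqrt{0.18}\approx0.42$, which is the value quoted in the introduction.
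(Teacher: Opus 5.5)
Your proof is correct and takes a different route from the paper's sketch. The paper sets $w_1=1$, $w_2=t$, notes the symmetry $r_1(t)=r_2(1/t)$ so the two correlations cross at $t=1$, and asserts that the minimum is maximized at the crossing. That last step silently requires $r_2$ to be increasing and $r_1$ decreasing on $(0,\infty)$. This is true, since $r_2'(t)=(1-\rho^2)/(t^2+2\rho t+1)^{3/2}\ge 0$, but the sketch does not argue it. You instead bound the minimum by the average and reduce everything to the identity $2(w_1^2+2\rho w_1w_2+w_2^2)-(1+\rho)(w_1+w_2)^2=(1-\rho)(w_1-w_2)^2$.

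Your route gives a calculus-free certificate from which the equality case, and uniqueness for $\rho<1$, can be read off directly. It also proves something stronger: the \emph{average} of the two correlations is already capped at $\sqrt{(1+\rho)/2}$. Since the identity holds for all real weights, the ceiling holds over every weight vector for which the correlations are defined, so the positivity restriction is not what makes the bound true. The paper's crossing picture, once monotonicity is supplied, gives instead the whole trade-off curve: raising the weight on one benchmark strictly lowers the composite's correlation with the other. That is the intuition the surrounding text relies on.

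One correction to your edge-case remark. At $\rho=-1$ you have $s_2=-s_1$, so at equal weights $c\equiv 0$ and both correlations are undefined. For every $w_1\neq w_2$ the minimum equals exactly $-1$. A limiting argument therefore yields $-1$, not $0$, and the formula holds at $\rho=-1$ only under an ad hoc convention such as assigning correlation $0$ to a constant. It is cleaner to state and prove the result for $\rho\in(-1,1]$, which is the range where both your argument and the paper's genuinely apply.
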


\noindent
\textit{Proof sketch.}
Setting $w_1 = 1$, the correlations $r_1(t)$ and $r_2(t)$ satisfy $r_1(t) = r_2(1/t)$ by symmetry, so they cross at $t = 1$.
The min is maximized at this crossing point, yielding $\sqrt{(1+\rho)/2}$.
(Full proof in Appendix~\ref{sec:app-additional}.)

\smallskip
\noindent
The 0.42 ceiling for MATH$\times$MuSR ($\rho = -0.64$) means that within the 188-model subset, a practitioner using a composite leaderboard will, in the best case, get a ranking that correlates only $0.42$ with what either individual benchmark measures.
For the redundant pair BBH$\times$MMLU-Pro ($\rho = +0.96$), the ceiling is $0.99$; the composite loses almost nothing, precisely because the pair is redundant.

\begin{table}[htbp]
\centering
\caption{Composite correlation ceiling (Theorem~\ref{thm:composite}) for selected benchmark pairs within the 188-model subset.}
\label{tab:composite-ceiling}
\small
\begin{tabular}{lcc}
\toprule
\textbf{Pair} & \textbf{Spearman $\boldsymbol{\rho}$} & \textbf{Ceiling $\sqrt{(1{+}\rho)/2}$} \\
\midrule
MATH $\times$ MuSR      & $-0.64$ & $0.42$ \\
GPQA $\times$ MATH       & $-0.34$ & $0.57$ \\
IFEval $\times$ MuSR     & $-0.31$ & $0.59$ \\
GPQA $\times$ IFEval     & $-0.27$ & $0.60$ \\
BBH $\times$ MATH        & $-0.21$ & $0.63$ \\
IFEval $\times$ MMLU-Pro & $-0.15$ & $0.65$ \\
\midrule
BBH $\times$ MMLU-Pro    & $+0.96$ & $0.99$ \\
\bottomrule
\end{tabular}
\end{table}

\paragraph{Practical guidance by use case.}
If you are choosing among models within a fixed size band (e.g., comparing 14--35B open-weight models for deployment): expect MATH and MuSR scores to pull in opposite directions ($\rho \approx -0.68$ in the Large band); monitor both.
If you are comparing models across a wide size range (e.g., deciding whether to upgrade from 7B to 70B): the correlation is positive ($\rho \approx +0.5$); larger models generally improve on both, and the tension does not apply.
If you are designing a benchmark suite: conditional negative correlations, where stably observed, may signal complementary information within a size class, but verify this on your specific model population.

Two questions follow: what produces the $27\times$ variation in ED across benchmarks, and does the structure hold as models advance?

\begin{figure}[htbp]
\centering
\includegraphics[width=\textwidth]{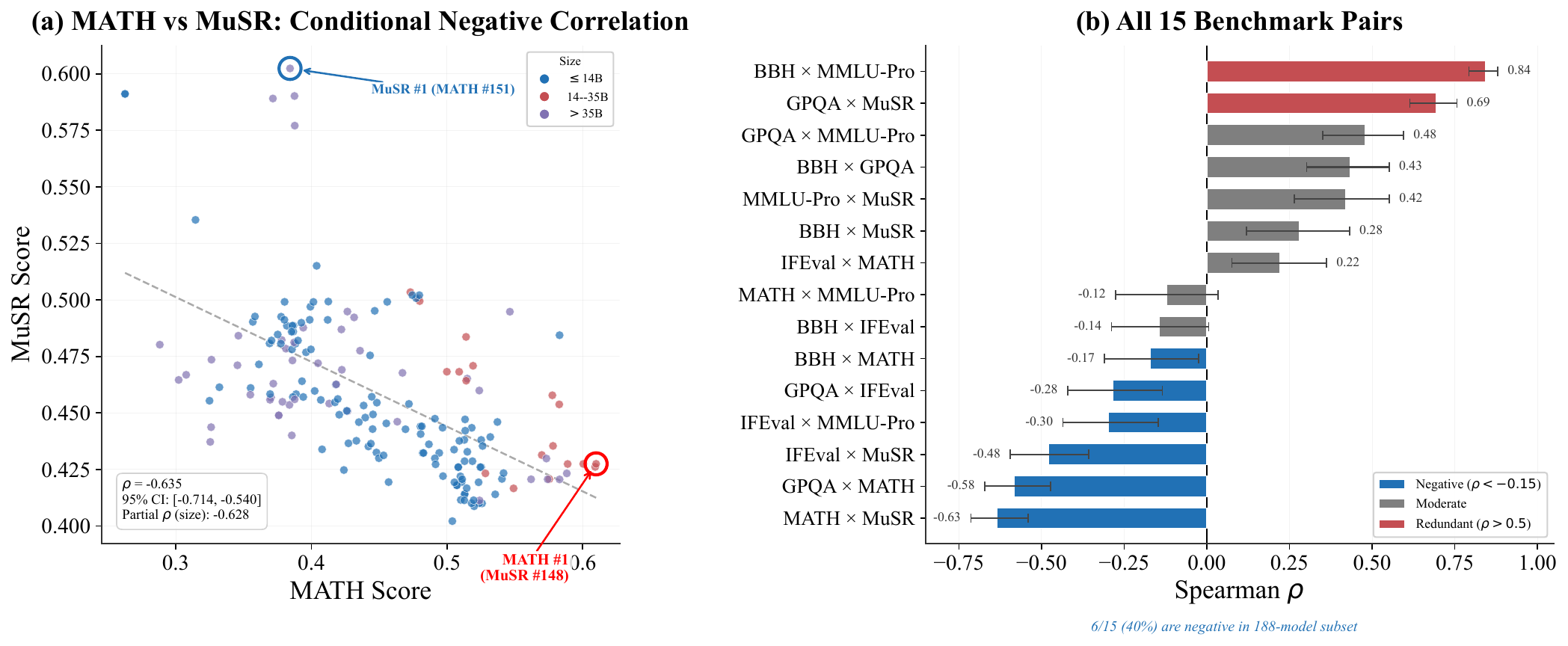}
\caption{Pairwise correlations in the Open LLM Leaderboard v2.
(a)~MATH vs.\ MuSR scatter for the 188-model subset, colored by model size: the negative correlation ($\rho = -0.64$) is visible within size bands, but in the full 4{,}576-model population the correlation is positive ($\rho = +0.50$).
(b)~All 15 benchmark pairs sorted by Spearman $\rho$ with bootstrap 95\% CIs.
Six pairs are significantly negative (blue); two are redundant (red, $\rho > 0.5$).
These negative correlations are specific to the 188-model subset and do not generalize to the full population (Section~\ref{subsec:conditional-correlations}).}
\label{fig:tradeoffs}
\end{figure}

%% file: sections/coverage.tex
\section{What Drives Effective Dimensionality?}
\label{sec:coverage}

The 22-benchmark atlas reveals a $>$20$\times$ range in ED.
What drives this variation?
Benchmark designers typically argue for measurement diversity by listing task categories, but individual case studies show that category counts can be dramatically misleading in both directions.
BFCL claims 20 function-calling categories, yet its per-instance $\text{ED} = 1.5$; MMLU-Pro spans 14 disciplines but achieves category-level $\text{ED} = 1.1$; conversely, BigCodeBench carries a single label yet achieves $\text{ED} = 29$.
(A small-sample aggregate analysis, $\rho = -0.04$, $n = 12$, is consistent with category counts being uninformative but too fragile to bear a strong global conclusion; see Appendix~\ref{sec:app-category-count}.)

\subsection{BFCL: Twenty Categories, Seven Groups}
\label{sec:bfcl-case}

The Berkeley Function Calling Leaderboard (BFCL)~\citep{bfcl} claims to evaluate 20 distinct function-calling capabilities.
We computed the full $20 \times 20$ Spearman correlation matrix across 109 models and applied hierarchical agglomerative clustering (average linkage, distance $= 1 - |r|$).
The 20 categories collapse into seven effective groups.
Most striking: \texttt{simple\_python} $\approx$ \texttt{simple\_java} $\approx$ \texttt{simple\_javascript} ($r = 0.84$), three languages testing the same underlying skill.

The seven groups (Figure~\ref{fig:bfcl}) are:
\begin{enumerate}[leftmargin=1.8em,itemsep=2pt]
  \item \textbf{Multi-function calling} (6 categories): selecting and composing multiple tool invocations.
  \item \textbf{Dialogue / multi-turn} (5 categories): conversational context management.
  \item \textbf{Web search} (2 categories, $r = 0.91$).
  \item \textbf{Format sensitivity} (1 category).
  \item \textbf{Irrelevance detection} (1 category).
  \item \textbf{Live irrelevance} (1 category, negatively correlated with most others).
  \item \textbf{Simple calling} (3 categories, pairwise $r$ up to $0.84$).
\end{enumerate}

\noindent
Hierarchical clustering identifies 7 effective groups out of 20 nominal categories (redundancy ratio $\approx 2.8\times$).
At the finest per-instance granularity (1{,}000 items $\times$ 109 models), BFCL's $\text{ED} = 1.5$, indicating that almost all discriminative variance collapses to a single axis.

MMLU-Pro exhibits an even more extreme case.
Its 14 subject categories (biology, chemistry, computer science, economics, etc.) yield category-level $\text{ED} = 1.1$, nearly perfectly unidimensional despite spanning the sciences, humanities, and social sciences.
The mean inter-category Spearman correlation is $0.95$; the most redundant pair, math$\times$physics, reaches $\rho = 0.99$.
For current LLMs, MMLU-Pro's 14 categories measure a single latent ability (general knowledge retrieval), not 14 independent disciplinary competencies.
If categories overcount (BFCL, MMLU-Pro) and sometimes undercount (BigCodeBench), what actually determines measurement breadth?

\subsection{Structural Heterogeneity, Not Topic Diversity}
\label{sec:what-drives-ed}

Why is BigCodeBench---a benchmark that claims only one task type---the most dimensionally rich in our corpus?
The answer is not topical diversity.
Controlled experiments on BigCodeBench (153 models) show that sampling 100 tasks from a single library (pandas) yields $\text{ED}\!\approx\!18$, nearly identical to sampling across seven libraries ($\text{ED}\!\approx\!19$).
Selecting the 100 tasks whose pass/fail profiles are most similar drops ED to ${\sim}10$, confirming that what matters is how tasks discriminate, not what topics they cover.

The mechanism operates at a deeper level than library diversity.
Library overlap (Jaccard similarity) between task pairs explains negligible variance in pass/fail correlation ($\rho = 0.06$), and restricting to tasks sharing an identical library combination yields the same ED as random samples.
Observable task features---library overlap, difficulty difference, solution length---together explain only 10.6\% of inter-task correlation variance.
The remaining 89\% reflects the logical structure of each task: different algorithmic patterns, control-flow structures, and specification-comprehension demands, even within a single library combination.
Operationalizing this as a measurable design-time feature is a direction for future work.

A three-benchmark comparison quantifies the pattern:

\begin{center}
\small
\begin{tabular}{lccc}
\toprule
& BigCodeBench & SWE-bench & LiveCodeBench \\
\midrule
Task pairs with $|r| < 0.3$ & 93\% & 57\% & 66\% \\
Mean $|r|$ between tasks    & 0.13 & 0.27 & 0.25 \\
\bottomrule
\end{tabular}
\end{center}

\noindent
BigCodeBench's inter-task correlations are dramatically lower because its tasks exercise genuinely different reasoning patterns; SWE-bench tasks share a common cognitive template (read bug report, locate code, edit, test), creating systematically correlated pass/fail patterns.

The practical lesson: \textbf{high ED requires tasks that discriminate models in genuinely different ways, producing approximately independent pass/fail patterns.}
This can be achieved by varying the logical demands of each task (exemplified by BigCodeBench's 1{,}140 tasks spanning diverse algorithmic patterns), not by varying topic labels.

\paragraph{How many of BigCodeBench's dimensions are real?}
BigCodeBench's Pearson-based $\text{ED} = 29$ is the highest in our corpus, but three validation experiments show that raw ED overstates true dimensionality.
Against a permutation null (100 column-shuffled replicates), only 8 PCs exceed the 95th percentile (PC1 at $12\times$ the null, declining to PC8 at $1.03\times$).
Split-half reliability (50 random model splits) confirms only PC1 ($\bar{r} = 0.86$) and marginally PC2 ($\bar{r} = 0.69$); PC3 onward is unstable ($\bar{r} < 0.5$).
For ranking preservation, 3 PCs suffice ($\tau = 0.997$; marginal gain $< 0.001$ per additional PC).

The convergent picture: ${\sim}8$--$9$ statistically significant dimensions, of which 2--3 are reliably reproducible.
Pearson-ED overstates by ${\sim}3$--$4\times$, consistent with binary-attenuation bias (Section~\ref{sec:method}).
Even so, BigCodeBench at 8 significant PCs remains $5$--$6\times$ more dimensional than BFCL (1 significant PC).
The implication: \textbf{independent measurement capacity is scarcer than raw ED values suggest across the entire evaluation ecosystem}.

\begin{figure}[htbp]
\centering
\includegraphics[width=\textwidth]{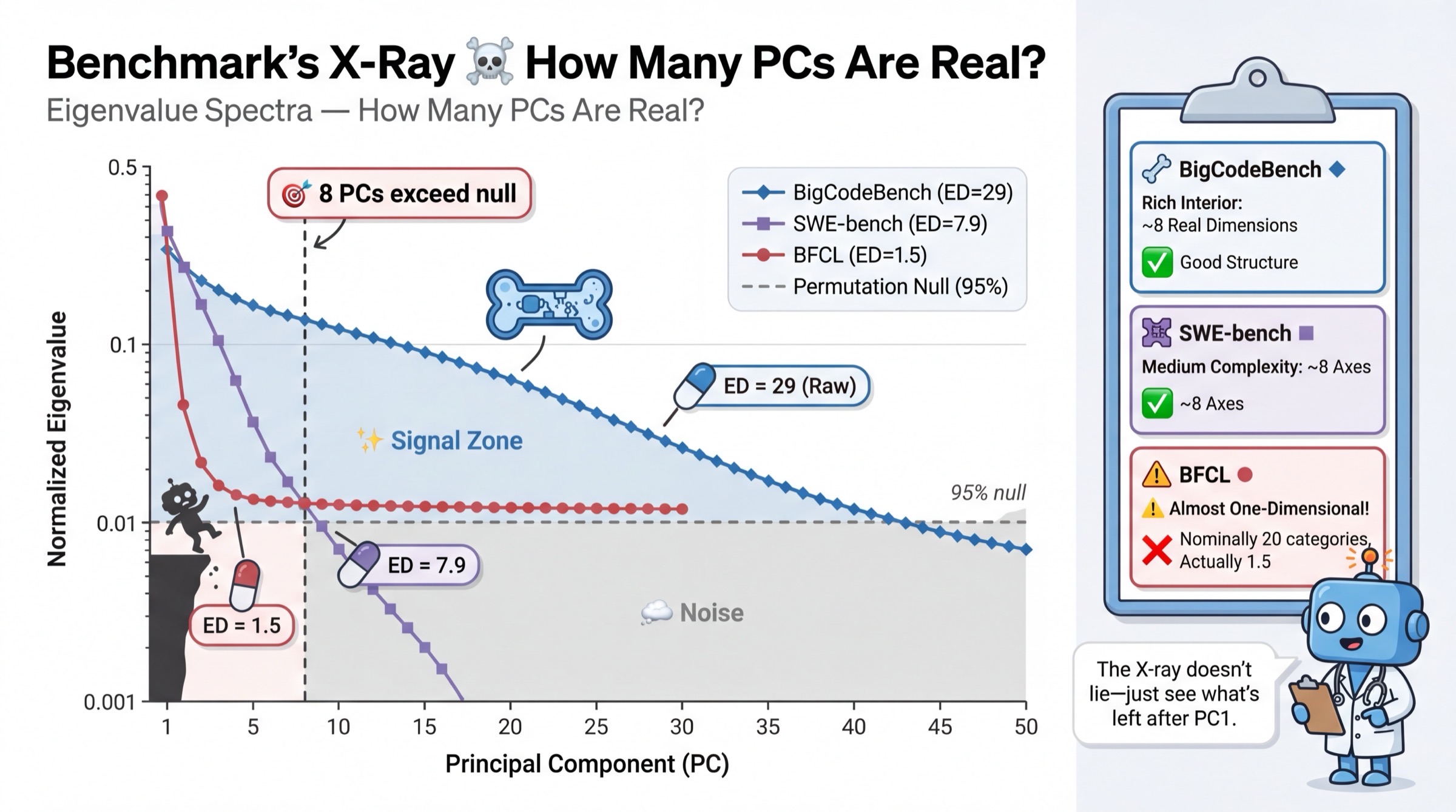}
\caption{Eigenvalue spectra reveal how many PCs are real.
BigCodeBench (blue) has 8 PCs exceeding the 95\% permutation null (shaded signal region); SWE-bench (purple) has ${\sim}4$--$6$; BFCL (red) collapses to one dominant axis.
Raw ED overstates true dimensionality by $3$--$4\times$.}
\label{fig:spectrum}
\end{figure}

\subsection{Saturation: When Is ED Reliable?}
\label{sec:saturation}

Saturation curves quantify how much discriminative capacity remains untapped (Figure~\ref{fig:causal}(b)).
Fitting $\text{ED}(n) = \text{ED}_\infty \cdot n / (n + n_{1/2})$, SWE-bench saturates rapidly ($\text{ED}_\infty = 8.4$, $n_{1/2} = 7$), while BigCodeBench saturates slowly ($\text{ED}_\infty = 35.4$, $n_{1/2} = 35$).
A saturated ED is a reliable benchmark property; an unsaturated ED is a lower bound.

At matched matrix dimensions ($300 \times 50$), BigCodeBench still shows higher ED than SWE-bench (Section~\ref{sec:landscape}); the full-table gap reflects BigCodeBench's larger item pool revealing more latent structure.
SWE-bench and BigCodeBench have nearly identical pairwise Hamming distances ($0.42$ vs.\ $0.47$) but ED differs nearly $4\times$ ($7.9$ vs.\ $28.9$); pairwise distance measures the magnitude of difference, not the number of independent directions (see Appendix~\ref{sec:app-additional} for full distributions).

\begin{figure}[htbp]
\centering
\includegraphics[width=0.95\textwidth]{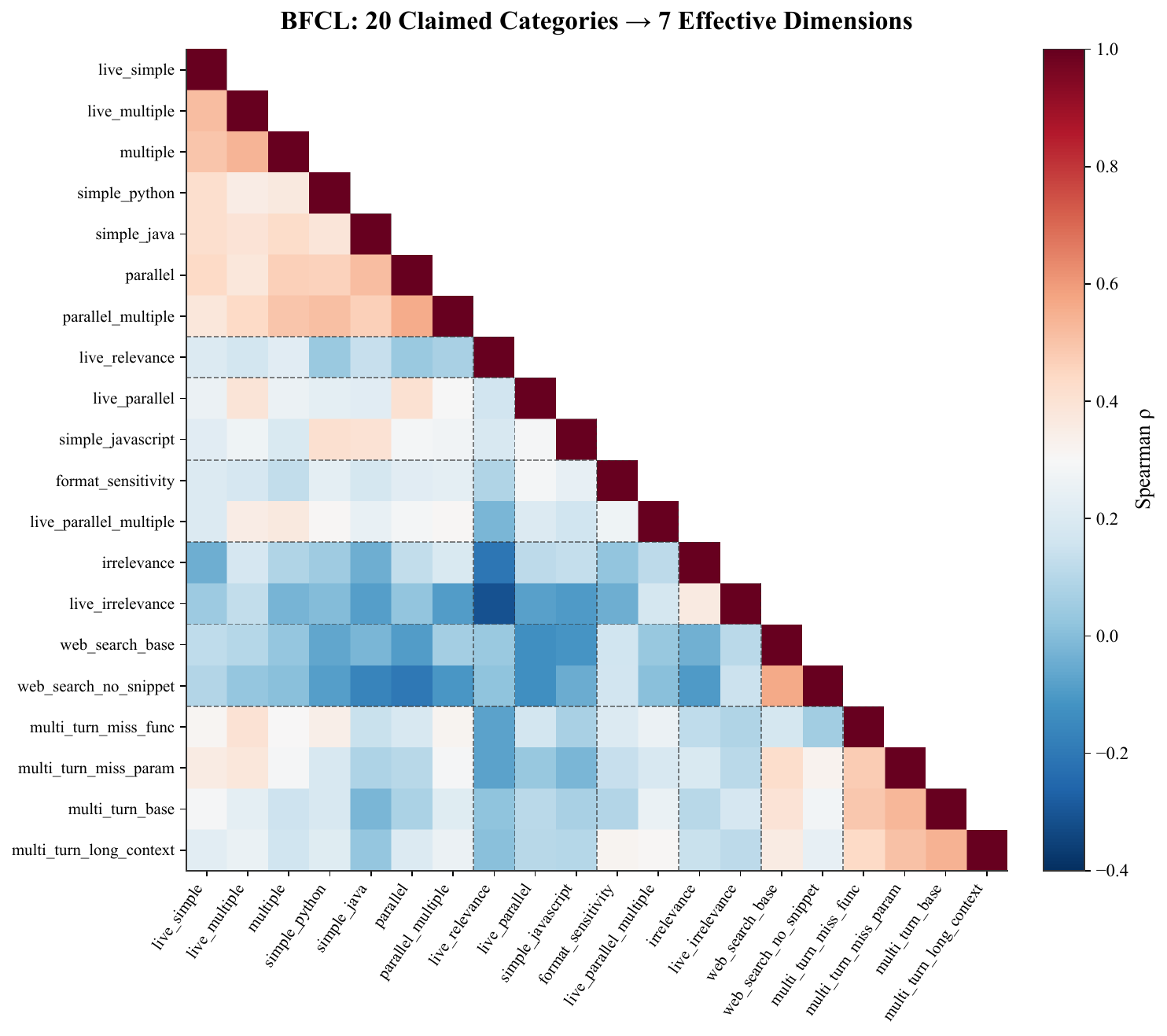}
\caption{BFCL's 20 claimed categories collapse into 7 effective groups (redundancy $= 2.8\times$).
Group~7: \texttt{simple\_python} $\approx$ \texttt{simple\_java} $\approx$ \texttt{simple\_javascript} ($r = 0.84$), three languages testing the same skill.}
\label{fig:bfcl}
\end{figure}

\begin{figure}[htbp]
\centering
\includegraphics[width=0.95\textwidth]{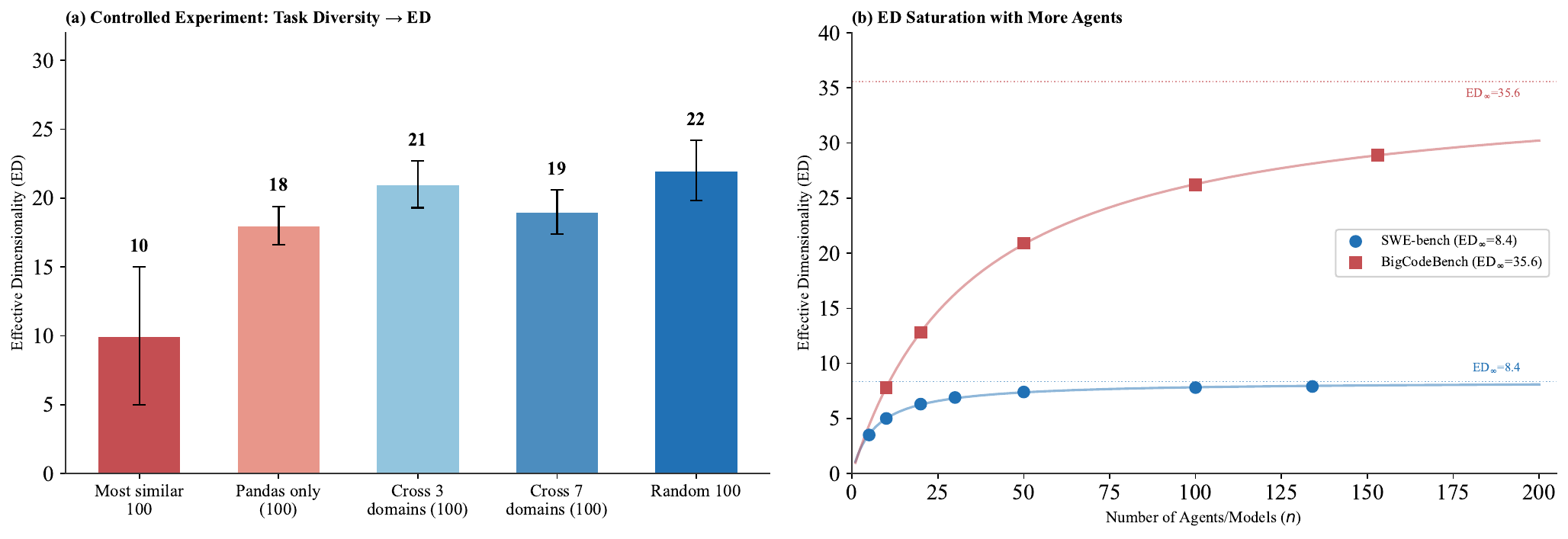}
\caption{(a)~Controlled experiment: structural heterogeneity, not topic diversity, drives ED.
(b)~Saturation curves: SWE-bench saturates at $\text{ED}_\infty \approx 8$; BigCodeBench saturates slowly ($\text{ED}_\infty \approx 35$).}
\label{fig:causal}
\end{figure}

%% file: sections/aging.tex
\section{Performance-Conditional Compression and Temporal Aging}
\label{sec:aging}

ED can serve as a monitoring tool: by tracking ED across performance strata or over time, maintainers can detect when a benchmark is losing discriminative breadth.
We distinguish two analyses below: performance-conditional compression (a cross-sectional analysis of the Open LLM Leaderboard, not a time series) and temporal aging (a true submission-time analysis of SWE-bench).

\paragraph{Performance-conditional compression: the Open LLM Leaderboard.}
On the full 4{,}576-model Open LLM Leaderboard, sorting models by composite score and computing ED in a sliding window of 500 models (step~200), ED declines from $2.65$ (weakest quintile) to $1.40$ (strong-model plateau), then partially recovers to $2.52$ among the strongest models (Mann--Kendall $\tau = -0.42$, $p = 0.007$).
To rule out ceiling effects, we standardize each benchmark to unit variance within each window before computing ED, thereby removing all variance-magnitude effects.
The ED decline not only persists but strengthens (Mann--Kendall $\tau = -0.72$, $p < 0.001$), confirming that the leaderboard's decline reflects a genuine change in the correlation structure---stronger models' benchmark scores become more positively correlated---not merely a reduction in score spread.

The recovery at the frontier is statistically robust: a bootstrap comparison (1{,}000 iterations, sampling 300 models from each group) yields $\text{ED}_{\text{plateau}} = 1.60 \pm 0.07$ vs.\ $\text{ED}_{\text{frontier}} = 2.68 \pm 0.12$, a difference of $1.08$ with 95\% CI $[0.81, 1.35]$ (Cohen's $d = 7.8$, $P(\text{frontier} > \text{plateau}) > 0.999$).
This diversity likely reflects heterogeneous training philosophies (reasoning-focused vs.\ instruction-following vs.\ knowledge-heavy) that produce more varied capability profiles than the incremental improvements characterizing the upper-middle tier.
If this pattern generalizes, benchmark maintainers need not retire a benchmark the moment ED declines; they should instead monitor whether frontier diversity is recovering.

\begin{figure}[htbp]
\centering
\includegraphics[width=0.85\textwidth]{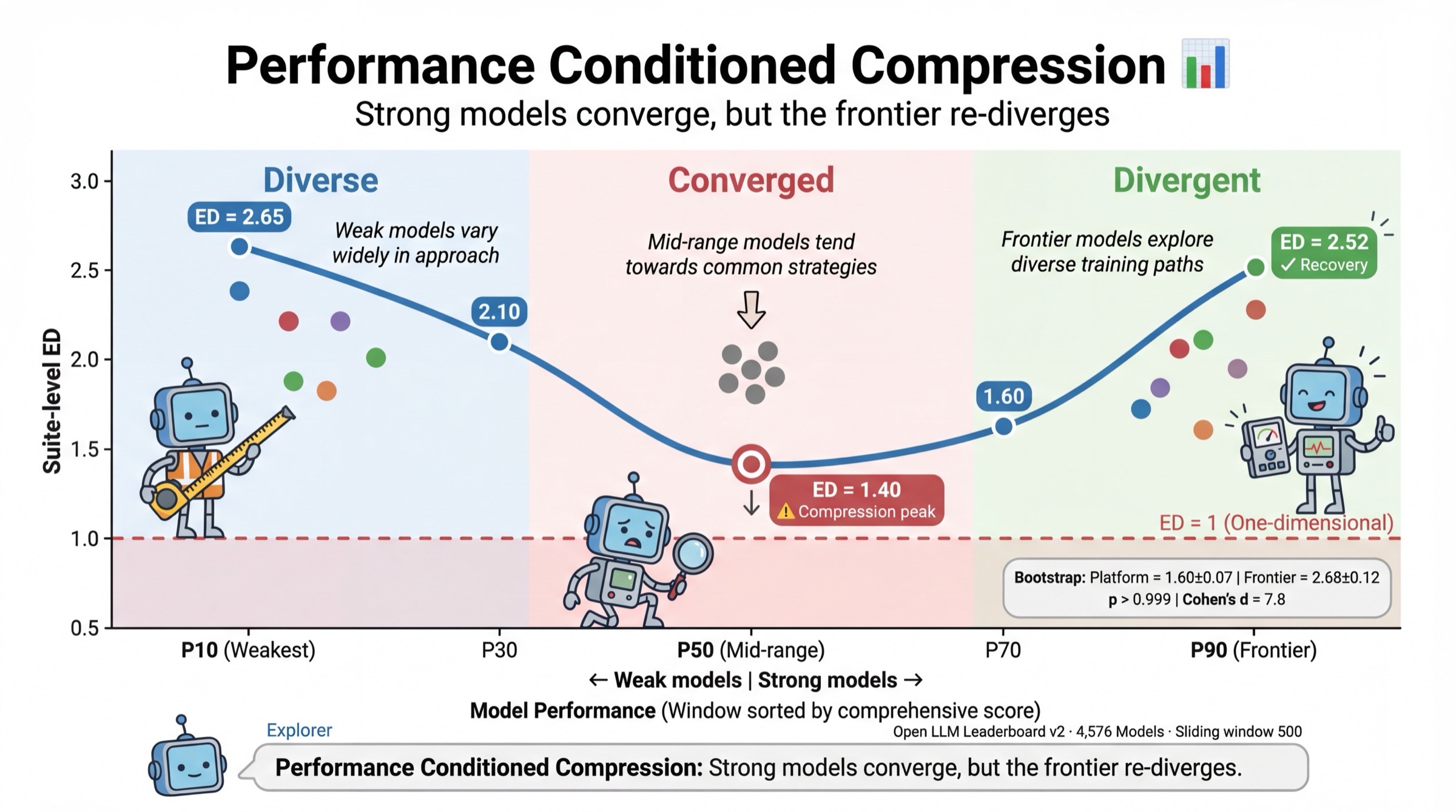}
\caption{Performance-conditional compression on the Open LLM Leaderboard (4{,}576 models, sorted by composite score, sliding window of 500).
ED declines from 2.65 (weakest quintile) to 1.40 (upper-middle plateau) then partially recovers to 2.52 among frontier models.
Note: the x-axis is a performance-ranked window, not a time axis.
The frontier recovery ($p > 0.999$, Cohen's $d = 7.8$) likely reflects heterogeneous training philosophies among top-performing models.}
\label{fig:lifecycle}
\end{figure}

\paragraph{SWE-bench: disentangling ceiling effects from structural change.}
SWE-bench Verified provides a per-instance time series: 134 agents submitted during a period of rapid capability growth.
ED declines 28\% as agents improve from 25\% to 66\% mean solve rate (Mann--Kendall $\tau = -0.56$, $p = 0.02$).
A controlled analysis cleanly separates genuine from artifactual components of this decline.
We identify 70 tasks (of 500) whose variance changes by $<$20\% between the early and late model cohorts.
On this fixed-variance subset, the ED trend disappears entirely (Mann--Kendall $\tau = 0.05$, $p = 1.0$): ED fluctuates between $13.6$ and $18.2$ with no directional trend.
Practitioners should therefore interpret per-instance ED trends with caution when pass rates approach~1; suite-level ED trends, computed on aggregate scores with sufficient variance, are more reliable.

BigCodeBench shows a related pattern: ordering models by aggregate performance, ED declines monotonically from $4.9$ to $2.7$ (Mann--Kendall $\tau = -1.0$, $p < 0.001$).
Adding architectural diversity can partially reverse this: inserting one randomly sampled early-era agent into the late SWE-bench cohort increases ED in 86\% of bootstrap trials, demonstrating that aging reflects the current homogeneity of the model population, not an irreversible property.

\paragraph{Summary: what is real and what is artifact.}
\begin{center}
\small
\begin{tabular}{lcc}
\toprule
\textbf{Analysis} & \textbf{Genuine?} & \textbf{Evidence} \\
\midrule
Open LLM suite-level decline   & Genuine         & Survives variance standardization ($\tau = -0.72$) \\
Frontier recovery               & Genuine         & Bootstrap $p > 0.999$, Cohen's $d = 7.8$ \\
SWE-bench per-instance decline & Partly artifact & Disappears on fixed-variance tasks \\
\bottomrule
\end{tabular}
\end{center}

\begin{figure}[htbp]
\centering
\includegraphics[width=\textwidth]{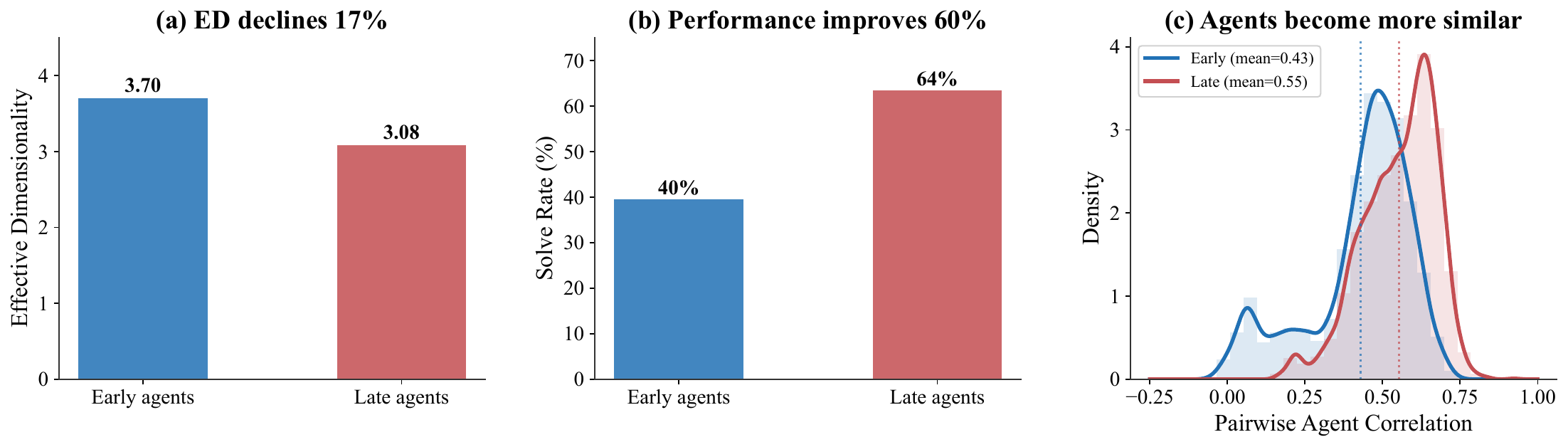}
\caption{Benchmark aging.
(a)~SWE-bench ED declines 17\% from early to late agents.
(b)~Solve rate improves 60\% (40\% to 64\%).
(c)~Pairwise agent correlations shift rightward (mean $\rho = 0.43 \to 0.55$): late agents are 29\% more similar to each other.
A fixed-variance control shows the SWE-bench decline is partly a ceiling effect; the leaderboard-level decline survives variance standardization.}
\label{fig:aging}
\end{figure}

%% file: sections/semantics.tex
\section{Semantic Validation of Principal Components}
\label{sec:semantics}

SWE-bench Verified ($\text{ED} \approx 8$) and BigCodeBench ($\text{ED} \approx 29$) sit at opposite ends of the dimensionality spectrum.
We validate that the structure detected by ED corresponds to meaningful capability dimensions using external metadata---not the PC scores themselves---to avoid circular validation.

\paragraph{SWE-bench Verified.}
PC1 (50.5\% of variance) is aggregate ability ($r = 0.999$ with solve rate).
PC2 (7.3\%) captures a generational shift: classifying agents as pre- vs.\ post-2025 using PC2 alone yields $\text{AUC} = 0.97$, reflecting a qualitative change in repository-navigation strategies not reducible to overall ability.
PC3 (${\sim}2\%$) separates open-weight from proprietary-API agents.
PC4 (${\sim}1\%$) resolves domain specificity (Django vs.\ Sphinx).

\paragraph{BigCodeBench.}
PC1 (16.5\%) is general programming ability---but it accounts for far less variance than SWE-bench's PC1, leaving 83.5\% across many axes.
PC2 (3.8\%) separates code-specialized models (AutoCoder, Magicoder) from general-purpose models (DeepSeek-R1, chat models); classifying by model-card purpose yields $\text{AUC} = 0.94$.
PC3 (2.2\%) separates instruction-tuned from base variants.

Beyond PC3, components carry interpretable but increasingly fragile signals.
PC4 (1.5\%) separates small code-specialists (DeepSeek-Coder-6.7B) from general large models (Mistral-Small, Claude-3-Sonnet), with task poles distinguishing os/collections-heavy tasks from sklearn/scipy-heavy tasks, a systems-programming-vs.-data-science axis.
PC6 (1.4\%) separates the Llama-3-70B family from GPT-4-class models.
PC9 (1.2\%) cleanly splits web-scraping tasks (bs4, requests) from data-analysis tasks (pandas, sklearn), with zero library overlap between poles.
However, split-half reliability (Section~\ref{sec:what-drives-ed}) shows that only PC1--2 are reproducible across model subsets; PC3 onward depends on which models are included.

The semantic identity of principal components is population-conditional: as new model families emerge, the latent structure shifts.
This is a strength of ED as a living diagnostic rather than a frozen taxonomy.

\begin{figure}[htbp]
\centering
\includegraphics[width=\textwidth]{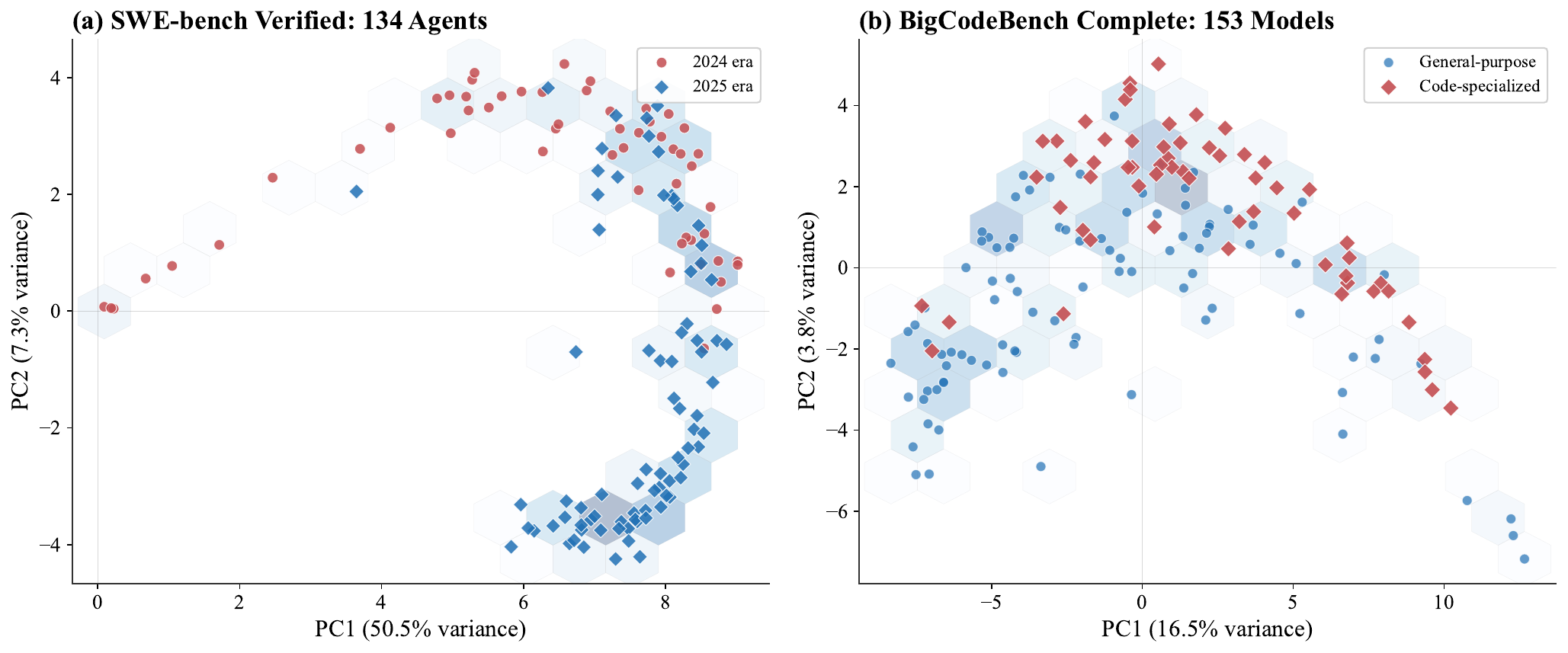}
\caption{Semantic validation.
(a)~SWE-bench: PC2 separates 2024-era agents (red) from 2025-era (blue), $\text{AUC} = 0.97$.
(b)~BigCodeBench: PC2 separates code-specialized (red) from general-purpose models (blue), $\text{AUC} = 0.94$.}
\label{fig:semantics}
\end{figure}

%% file: sections/criteria.tex
\section{A Diagnostic Workflow for Benchmark Maintainers}
\label{sec:guidelines}

\paragraph{How to interpret ED (and how not to).}
ED is a population-conditional spectral summary, not a literal count of independent skills.
\textbf{Do:} use ED to compare benchmarks' relative measurement breadth within comparable model populations; to flag redundancy when suite-level ED is close to~1; to estimate evaluation cost (higher ED $\to$ harder to compress).
\textbf{Do not:} interpret ED as an exact factor count; it provides a well-calibrated upper bound (tetrachoric correction yields tighter estimates; Appendix~\ref{sec:app-additional}); compare raw ED values across benchmarks with very different model populations or matrix shapes without matched-dimension controls; or use ED as a standalone benchmark quality score, since low ED is appropriate for focused evaluations (e.g., MATH), and high ED does not guarantee practical usefulness.

\subsection{Four-Step Diagnostic Workflow}
\label{sec:workflow}

\begin{figure}[htbp]
\centering
\includegraphics[width=\textwidth]{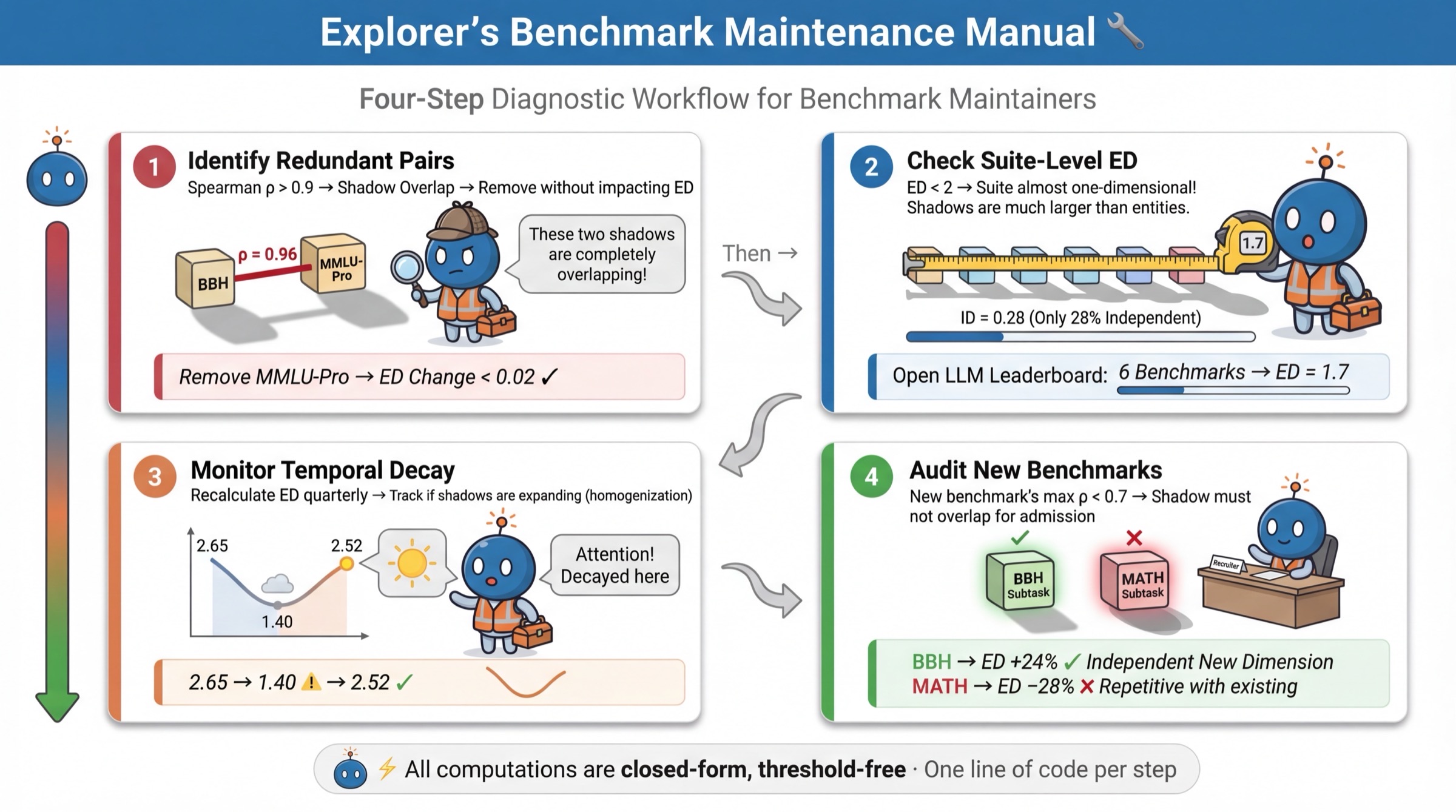}
\caption{Four-step diagnostic workflow for benchmark maintainers.
(1)~Flag redundant pairs ($\rho > 0.9$); (2)~check suite-level ED; (3)~monitor temporal decline; (4)~vet new additions ($\max \rho < 0.7$).
All computations are closed-form and require only a score matrix.}
\label{fig:workflow}
\end{figure}

Given a benchmark suite, we recommend the following sequence.
Each step is grounded in empirical findings from the preceding sections; all computations are closed-form and require only the score matrix.

\begin{enumerate}[leftmargin=1.8em, itemsep=4pt]
    \item \textbf{Flag redundant pairs} ($\boldsymbol{\rho > 0.9}$).
    Compute the pairwise Spearman correlation matrix and flag pairs exceeding $0.9$ as redundancy candidates.
    BBH and MMLU-Pro ($\rho = 0.96$; Section~\ref{subsec:redundancy}) are the canonical example: removing MMLU-Pro changes ED by $< 0.02$.
    Strong negative $\rho$ signals complementarity, not redundancy, and should be retained.

    \item \textbf{Check suite-level ED.}
    Compute ED on the suite's score matrix.
    If $\text{ED} < 2$, the suite is effectively one-dimensional and adding benchmarks is the priority.
    The Open LLM Leaderboard v2 has $\text{ED} = 1.7$ with information density $\text{ID} = 0.28$ (Section~\ref{subsec:redundancy}).

    \item \textbf{Monitor temporal decline.}
    Recompute ED quarterly as new models are evaluated.
    A sustained decline may signal population homogenization (Section~\ref{sec:aging}).
    A benchmark whose ED declines monotonically with no frontier recovery is more urgently in need of replacement than one exhibiting a U-shaped pattern.

    \item \textbf{Vet new additions} ($\boldsymbol{\max \rho < 0.7}$).
    When adding a new benchmark, verify that its maximum positive $\rho$ with existing benchmarks is below $0.7$; if it exceeds $0.9$, it adds little independent information.
\end{enumerate}

\noindent
\textbf{When strong claims are needed} (e.g., retiring a benchmark, publishing an ED-based comparison), complement the screening steps above with:
\begin{itemize}[leftmargin=1.5em, itemsep=2pt]
    \item a permutation-null test to identify which PCs exceed random structure;
    \item a saturation curve to check whether ED has converged at the current model count;
    \item matched-dimension subsampling if comparing benchmarks with very different matrix shapes.
\end{itemize}
ED computation is closed-form; interpreting absolute dimensionality may require these additional checks.

\subsection{Screening Heuristics}
\label{sec:heuristics}

The following thresholds are empirically derived from the 22-benchmark landscape under task-centering (Table~\ref{tab:reference}) and should be applied within a consistent preprocessing pipeline.

\paragraph{Heuristic 1: Match ED to purpose.}
ED measures discriminative breadth, not evaluation quality.
MATH ($\text{ED} = 1.3$) is an excellent focused test of mathematical reasoning; its unidimensionality is a design strength, appropriate for assessing specialized capabilities.
High ED is desirable when the goal is a comprehensive suite that captures multiple independent capabilities:
\begin{itemize}[leftmargin=1.25em,itemsep=2pt]
\item $\text{ED} < 3$ (tetrachoric-corrected: ${\sim}1$--$2$): single-axis ranking, appropriate for focused evaluations, but insufficient as a standalone comprehensive suite.
\item $\text{ED} \in [5, 10]$ (corrected: ${\sim}3$--$5$): multi-dimensional discrimination, the regime of MMLU-Pro ($6.8$) and LiveBench ($5.9$).
\item $\text{ED} > 10$ (corrected: $>5$): rich measurement, but must be confirmed by a saturation curve (Section~\ref{sec:saturation}) and semantic validation (Section~\ref{sec:semantics}).
\end{itemize}
\noindent
(All thresholds are based on Pearson-ED, which is an upper bound; tetrachoric-corrected values are approximately half as large.)

\paragraph{Heuristic 2: Low positive redundancy with existing benchmarks.}
A new benchmark contributes independently only if $\max \rho < 0.7$; above this threshold, marginal information is small.
$\max \rho > 0.9$ is outright redundancy---BBH and MMLU-Pro are the canonical example---and one can be retired without information loss.

\paragraph{Heuristic 3: Conditional negative correlations signal complementarity.}
Within a fixed model-size class, benchmarks negatively correlated ($\rho < -0.3$) with an existing evaluation are especially informative, exposing capability dimensions that current optimization targets do not reward.
These negative correlations are conditional on model scale (Section~\ref{sec:redundancy-tradeoffs}) and may not generalize to all populations.

\subsection{Role-Specific Guidance}
\label{sec:role-guidance}

\paragraph{For benchmark designers.}
Compute ED from the eigenvalue spectrum, not from category counts, since the number of stated task types is an unreliable proxy for measurement dimensionality (Appendix~\ref{sec:app-category-count}).
Use inter-category correlation matrices and hierarchical clustering (Section~\ref{sec:bfcl-case}) to verify which categories are genuinely independent before advertising breadth.

\paragraph{For benchmark users.}
Check the pairwise $\rho$ matrix before reporting scores on multiple benchmarks: ``excels on six benchmarks'' is uninformative when four of the six are echoes of a single factor.
For long-tail deployment contexts, per-domain analysis is essential, as global rankings can mask substantial variation across task subsets.

\paragraph{For model developers.}
Within a fixed model-size class, improving one benchmark is often associated with lower scores on another: the MATH--MuSR conditional correlation ($\rho = -0.64$) is the sharpest example (Section~\ref{sec:redundancy-tradeoffs}).
Monitor both the target benchmark and its negatively correlated counterparts.

\subsection{Advanced: Task Selection with ED-Greedy}
\label{sec:ed-greedy}

Beyond diagnosis, ED suggests a constructive application: selecting tasks to maximize measurement diversity.
\textsc{ED-Greedy} (Algorithm~\ref{alg:ed-greedy}) starts from an empty set and iteratively adds the task producing the largest marginal increase in ED, with complexity $O(k \cdot T \cdot N^2)$.

\begin{algorithm}[htbp]
\caption{\textsc{ED-Greedy}: Task Selection for Maximum Effective Dimensionality}
\label{alg:ed-greedy}
\begin{algorithmic}[1]
\REQUIRE Pass/fail matrix $M \in \{0,1\}^{T \times N}$, budget $k$
\ENSURE Selected task set $S^*$ with $|S^*| = k$
\STATE $S \leftarrow \emptyset$
\FOR{$i = 1$ to $k$}
  \STATE $t^* \leftarrow \arg\max_{t \in [T] \setminus S}\; \text{ED}(M_{S \cup \{t\}})$
  \STATE $S \leftarrow S \cup \{t^*\}$
\ENDFOR
\RETURN $S$
\end{algorithmic}
\end{algorithm}

At $k = 50$ tasks on BigCodeBench, ED-Greedy achieves $\text{ED} = 37.8$ versus $4.2$ for random selection.
At $k = 50$, it also achieves competitive ranking fidelity ($\tau = 0.85$, comparable to IRT-based selection at $\tau = 0.84$).
The IRT-inspired baseline is conceptually similar to the tinyBenchmarks approach of \citet{polo2024tinybenchmarks}; ED-Greedy optimizes a complementary objective (measurement diversity) that tinyBenchmarks does not target.

\paragraph{ED tracks compression cost.}
Across eight benchmarks with $T \geq 100$ per-instance items, higher-ED benchmarks generally need more tasks to achieve $\tau \geq 0.95$ with the full-task ranking (Spearman $\rho = 0.64$, $n = 8$).
BFCL ($\text{ED} = 1.5$) achieves $\tau \geq 0.95$ with just 8\% of its tasks; BigCodeBench ($\text{ED} = 29$) requires 56\%.
Two exceptions---MMLU-Pro and BBH---have high ED ($20$--$27$) yet compress relatively easily, because their large task pools include substantial internal redundancy.
The practical lesson: ED flags which benchmarks cannot be aggressively compressed (high ED + moderate $T$), but for benchmarks with very large $T$, internal item redundancy matters as much as spectral diversity.

\paragraph{Prospective generalization.}
Splitting models into a design cohort (bottom 60\%) and future cohort (top 40\%), ED-Greedy generalizes well: on SWE-bench, it beats random on the future cohort ($\tau = 0.81$ vs.\ $0.77$ at $k = 100$), with essentially no overfitting.
The high-variance baseline catastrophically overfits in all conditions.
On low-ED benchmarks (BFCL, $\text{ED} = 1.5$), random selection is already near-optimal.
The practical rule: ED-Greedy adds value when ED $> 3$ and the budget covers $\geq$10\% of tasks; otherwise random suffices.

%% file: sections/related.tex
\section{Related Work}
\label{sec:related}

Despite the rapid proliferation of AI benchmarks (over 50 major evaluation suites exist as of 2025), there is no established methodology for measuring whether a benchmark provides independent information or how much of its claimed breadth is redundant.
We organize prior work by the claim it advances, grouping contributions into three threads: low-dimensional structure and redundancy, conditional negative correlations, and benchmark quality criticism.
Each thread identifies part of the problem; none provides the systematic, computable framework that ED delivers.

\paragraph{Low-dimensional structure and redundancy.}

Several recent studies have independently observed that LLM evaluation suites contain far fewer independent axes than their task taxonomies suggest.
\citet{kearns2026structure} apply PCA to the Open LLM Leaderboard and identify a dominant general factor that explains the majority of cross-model variance, echoing the $g$-factor tradition in psychometrics.
\citet{federiakin2025psychometric} apply Confirmatory Factor Analysis to IFEval's 25 instruction-following types and find that they load onto far fewer latent factors than the taxonomy advertises, with several type pairs exhibiting negative inter-factor correlations.
\citet{polo2024tinybenchmarks} and \citet{hagos2024sloth} exploit this low-rank structure for practical purposes, showing that small item subsets suffice to predict full-benchmark scores with high fidelity, a strategy that is viable precisely because the effective rank is low.
\citet{zhang2025efficienteval} formalize efficient evaluation for multimodal LLMs as an active-testing problem, providing theoretical guarantees on ranking preservation under subsampling.

The psychometrics literature offers dedicated dimensionality-assessment tools.
DETECT~\citep{zhang2004conditional} uses conditional covariances to identify clusters of items sharing a latent dimension; DIMTEST~\citep{stout1987nonparametric} provides a hypothesis test for unidimensionality based on item-pair covariances after matching on a nuisance composite.
Both methods are well-suited to educational testing, where item counts are moderate and examinee populations are large.
DETECT and DIMTEST offer greater statistical rigor than the participation ratio; they are designed for formal hypothesis testing with well-characterized Type~I error.
We chose the participation ratio for a different design goal: a diagnostic that trades some statistical power for tractability and deployability.
Specifically:
(i)~it is closed-form and requires no iterative estimation, making it tractable for matrices with $>$10{,}000 items;
(ii)~it yields a continuous scalar rather than a binary accept/reject decision, enabling cross-benchmark comparisons and temporal tracking;
(iii)~it does not require specifying a partition of items into clusters or a nuisance dimension, avoiding assumptions that may not hold when the latent structure is unknown.
For benchmarks where ED indicates low dimensionality ($\text{ED} < 3$), DETECT and DIMTEST provide granular item-cluster analysis as a natural complement.
Related spectral surrogates include the stable rank~\citep{rudelson2007sampling} and Shannon effective rank~\citep{roy2007effective}; ED (the R\'{e}nyi-2 effective rank) is more conservative ($\leq \exp(H_1)$), which is appropriate for binary benchmark matrices where spectra are strongly spiked.

\paragraph{Population-conditional spectral analysis of benchmarks.}
The participation ratio originates in condensed-matter physics~\citep{thouless1974,bell1970} and computational neuroscience~\citep{gao2017,stringer2019}, where the matrix under analysis represents a fixed physical or biological quantity.
Our conceptual departure is to treat benchmark dimensionality as a \emph{population-conditional} diagnostic: not ``how many categories does this benchmark have?'' (a static question) but ``how many independent signals does it currently provide?'' (a dynamic question).
This interactive perspective---benchmarks as dynamic measurement instruments rather than fixed rulers---has no precedent in AI evaluation.

Our work differs from prior PCA- and IRT-based studies in three respects:
(i)~per-instance granularity, which we show inflates ED by $2$--$4\times$ at category level;
(ii)~cross-benchmark scale (22 benchmarks, 8 domains, 8{,}400+ models);
(iii)~systematic quantification of conditional negative correlations across all benchmark pairs, not isolated case studies.

\paragraph{Conditional negative correlations across benchmarks.}

A growing body of empirical work documents specific capability tensions in LLMs.
\citet{fu2025scaling} show that training models with extended chain-of-thought traces improves formal reasoning benchmarks but degrades instruction-following accuracy, a phenomenon they term ``scaling reasoning, losing control.''
\citet{li2025reasoning} document a parallel tension between reasoning ability and safety alignment: models fine-tuned for stronger reasoning exhibit increased rates of harmful output generation, suggesting that reasoning optimization and safety optimization compete for model capacity.
\citet{federiakin2025psychometric} report negative inter-factor correlations within IFEval, hinting that even within a single benchmark, negative correlations can arise.
\citet{zhang2024structural} approach the problem from a social-choice perspective, showing that aggregation methods face a formal diversity--stability tension when the underlying capability axes are negatively correlated.

These studies each isolate one or two negative-correlation pairs under controlled conditions.
Our contribution is systematic scope: we quantify all 15 pairwise correlations on the Open LLM Leaderboard v2 with bootstrap confidence intervals, partial correlations controlling for model size, and size-stratified analysis (Section~\ref{sec:redundancy-tradeoffs}).
The result---40\% of pairs significantly anti-correlated---reveals that the isolated negative correlations reported in prior work are instances of a pervasive conditional phenomenon (Simpson's paradox), not anomalies.

\paragraph{Benchmark quality and criticism.}

A parallel literature critiques AI benchmarks on conceptual and methodological grounds.
\citet{raji2021ai} argue that benchmarks are routinely used as proxies for capabilities they were never designed to measure, inflating claims of progress.
\citet{eriksson2025benchmarks} provide a taxonomy of benchmark failure modes (construct under-representation, label noise, metric misalignment), offering a conceptual framework for diagnosing flawed evaluations.
\citet{reuel2024betterbench} propose best practices for benchmark construction grounded in measurement theory, emphasizing reliability, validity, and documentation standards.

These critiques are predominantly qualitative: they identify what can go wrong and recommend what should be done, but they do not provide computable diagnostics for specific benchmarks.
Our work supplies the quantitative counterpart.
Where prior work says ``benchmarks may be redundant,'' we show that BBH and MMLU-Pro correlate at $\rho = 0.96$; where prior work warns that composite scores may be misleading, we identify exactly which pairs are negatively correlated conditional on model size, and by how much.
Effective Dimensionality, pairwise redundancy analysis, and conditional-correlation detection are tools that can be applied to any benchmark with a public score matrix, converting qualitative intuitions into falsifiable measurements.

%% file: sections/discussion.tex
\section{Discussion and Limitations}
\label{sec:discussion}

\subsection{Limitations}
\label{sec:limitations}

\textbf{Binary SVD.}
Pearson-based ED applied to binary matrices overestimates true dimensionality by $1.5$--$8\times$ (Section~\ref{sec:method}); tetrachoric correction halves the values while preserving the rank ordering.
All reported ED values are upper bounds.
\textbf{Population dependence.}
ED is a property of the benchmark--population interaction: restricting BigCodeBench from 153 to 10 models reduces ED from 29 to 7.
Saturation curves and matched-dimension controls (Section~\ref{sec:landscape}) confirm that relative comparisons are robust ($\rho = 1.0$ at $300 \times 50$); absolute values are population-conditional.
\textbf{Correlation $\neq$ identity.}
Two items with highly correlated pass/fail vectors may test the same capability or distinct capabilities that current models happen to co-vary on.
All conclusions about redundancy are conditioned on the current model ecosystem and should be recomputed as the ecosystem evolves.
\textbf{Conditional correlations.}
The negative correlations (e.g., MATH--MuSR $\rho = -0.64$) are specific to the 188-model subset and disappear in the full population (Section~\ref{sec:redundancy-tradeoffs}).
\textbf{Shape constraints.}
ED cannot exceed $\min(T, N)$; benchmarks with $\leq 6$ task groups are flagged in Table~\ref{tab:reference}.

\subsection{Reliability Tiers}
\label{sec:reliability}

Recomputing all statistics on seven subpopulations of the Open LLM Leaderboard:
\textbf{High reliability:} BBH$\approx$MMLU-Pro redundancy ($\rho \in [0.88, 0.97]$), granularity artifact, ranking fragility (38\% champion change).
\textbf{Medium:} MATH--MuSR negative correlation is conditional (positive in every subpopulation unconditionally).
\textbf{Interpret with caution:} Absolute ED values ($1.49$--$2.20$ across subpopulations for the 6-benchmark suite).

\subsection{Broader Discussion}
\label{sec:broader}

\paragraph{Maintainer decision studies.}
For pruning: ED correctly identifies IFEval as irreplaceable ($|\Delta\text{ED}| = 0.30$, $\tau = 0.83$ after removal).
For addition: adding a BBH subtask increases suite ED by $+24\%$, while adding a MATH subtask decreases it by $-28\%$; disaggregating an existing benchmark can be more valuable than adding a new broad-coverage suite.

\paragraph{Toward a new paradigm for benchmark design.}
Current benchmark design is intuition-driven: designers list categories, collect items, and publish, with no standard step to check whether the resulting benchmark measures multiple independent capabilities.
ED introduces empirical measurement of benchmark quality as a standard practice, analogous to test coverage in software engineering.
Designers can compute ED on a pilot set to identify redundant task clusters before publication; maintainers can monitor ED over time to detect homogenization (Section~\ref{sec:aging}); leaderboard operators can use pairwise $\rho$ and suite-level ED to decide which benchmarks to include.
The ED-Greedy algorithm (Section~\ref{sec:ed-greedy}) operationalizes this as a task-selection procedure.

\paragraph{Future directions.}
The conditional negative correlations call for causal verification via controlled training experiments.
Our empirical evidence ($\rho = 0.64$ across 8 benchmarks) that high-ED benchmarks produce more robust rankings awaits theoretical formalization.
Deploying ED as a real-time health diagnostic that automatically flags sustained declines would operationalize the aging findings at scale.

\subsection{Conclusion}
\label{sec:conclusion}

AI evaluation suites often report many scores without checking whether those scores carry independent information.
We introduced effective dimensionality as a fast, population-conditional upper-bound diagnostic of measurement breadth, and applied it to 22 benchmarks across 8 domains.
The central finding: benchmark suites contain far less independent measurement breadth than their score sheets suggest.
The Open LLM Leaderboard behaves like roughly two effective measurement axes ($\text{ED} = 1.7$), BFCL's 20 nominal categories collapse under spectral analysis, and even BigCodeBench ($\text{ED} = 29$) has only ${\sim}8$ statistically significant dimensions when tested against a permutation null.

The key design principle is structural heterogeneity---tasks that discriminate models in genuinely different ways---and ED provides a fast screening metric to verify this at construction time.
For practitioners: a four-step workflow with validation checks (Section~\ref{sec:guidelines}), a score matrix, and a few lines of code.
The 22-benchmark atlas is a snapshot; the screening methodology is permanent.

%% file: appendix/data_details.tex
\section{Data Sources and Preprocessing}
\label{sec:app-data}

This appendix describes the provenance and preprocessing of each data source used in the paper.

\paragraph{SWE-bench experiments repository.}
Pass/fail matrices for SWE-bench Test, Lite, Verified, Multilingual, and Multimodal were extracted from the official SWE-bench experiments repository on GitHub.\footnote{\url{https://github.com/swe-bench/experiments}}
Each row corresponds to a GitHub issue instance and each column to an agent submission.
Entries are natively binary (resolved or not), so no binarization was required.
We retained all agent submissions present in the repository as of our data collection date, yielding matrices ranging from 12 agents (Multimodal) to 134 agents (Verified).

\paragraph{HuggingFace datasets.}
Score matrices for LiveBench, LiveCodeBench, and GAIA were obtained from their respective HuggingFace dataset pages.
LiveBench and LiveCodeBench provide per-question binary correctness judgments; GAIA provides per-level pass rates.
For GAIA, which reports continuous accuracy scores at three difficulty levels, we binarized at a threshold of 0.5: scores above 0.5 are coded as pass, scores at or below 0.5 as fail.

\paragraph{BFCL GitHub repository.}
The Berkeley Function Calling Leaderboard (BFCL) score matrix was extracted from its official GitHub repository.\footnote{\url{https://github.com/ShishirPatil/gorilla/tree/main/berkeley-function-call-leaderboard}}
The raw data provides per-instance pass/fail judgments across 20 function-calling categories and 109 models.
We used the per-instance granularity for our primary analysis and additionally computed category-level aggregates for the redundancy analysis in Section~\ref{sec:bfcl-case}.

\paragraph{BigCodeBench.}
Per-task pass/fail matrices for BigCodeBench (complete and instruct variants, standard and hard splits) were obtained from the BigCodeBench HuggingFace collection.\footnote{\url{https://huggingface.co/collections/bigcode/bigcodebench}}
Each entry indicates whether a model's generated code passes the task's unit test suite.
The data is natively binary.
Model counts range from 126 (instruct) to 199 (hard-complete).

\paragraph{Open LLM Leaderboard v2.}
Scores for 4,576 models across six benchmarks (BBH, GPQA, IFEval, MATH, MMLU-Pro, MuSR) were downloaded from the HuggingFace Open LLM Leaderboard v2.\footnote{\url{https://huggingface.co/spaces/open-llm-leaderboard/open_llm_leaderboard}}
At per-benchmark granularity, each model receives six continuous scores (normalized accuracy or exact-match rates).
At per-subtask granularity, scores are available for 43 subtasks across the six benchmarks for a subset of 188 models.
Per-subtask scores were binarized at 0.5.

\paragraph{MMLU-Pro.}
Two MMLU-Pro matrices are used in this paper.
(1)~The 47-model matrix ($12{,}257 \times 47$, $\text{ED} = 6.8$) from the TIGER-Lab GitHub repository\footnote{\url{https://github.com/TIGER-AI-Lab/MMLU-Pro}} appears in Table~\ref{tab:reference}.
(2)~A larger 118-model matrix ($12{,}032 \times 118$, $\text{ED} = 20.8$) was assembled from per-question results in the Open LLM Leaderboard v2 detail datasets; this matrix is used only in the compression analysis (Section~\ref{sec:ed-greedy}).
The higher ED in the 118-model matrix is consistent with the saturation analysis: more models reveal more latent axes.
Both matrices are natively binary (correct or incorrect).

\paragraph{Standardization.}
Two preprocessing steps were applied uniformly across all datasets.
First, any continuous score was binarized at a threshold of 0.5.
This threshold was chosen as a natural midpoint; sensitivity analyses with thresholds of 0.3 and 0.7 produced qualitatively identical ED rankings (see supplementary materials).
Second, missing entries (NaN values arising from models not evaluated on all items) were imputed with the column mean---i.e., the mean pass rate of the model across all items for which it was evaluated.
This is a conservative imputation strategy that neither inflates nor deflates the variance contributed by any particular task, and is standard practice in PCA applications with incomplete data.

%% file: appendix/additional.tex
\section{Additional Experimental Results}
\label{sec:app-additional}

This appendix summarizes supplementary analyses that support the main text.
Full tables, figures, and per-benchmark breakdowns are provided in the supplementary materials.

\paragraph{Centering sensitivity.}
Table~\ref{tab:centering} reports ED under three centering methods for the three largest per-instance benchmarks.
Task-centering (our default) removes each task's mean pass rate across models; model-centering removes each model's mean pass rate across tasks; double-centering removes both.
The benchmark ordering is stable across all methods, supporting the robustness of the relative comparisons in the main text.

\begin{table}[htbp]
\centering
\caption{ED under three centering methods. The relative ordering (BigCodeBench $\gg$ SWE-bench $>$ BFCL) is stable across all methods.}
\label{tab:centering}
\small
\begin{tabular}{lccc}
\toprule
\textbf{Benchmark} & \textbf{Task-center} & \textbf{Model-center} & \textbf{Double-center} \\
\midrule
BigCodeBench & 28.9 & 4.4 & 62.3 \\
SWE-bench Verified & 7.9 & 3.8 & 30.8 \\
BFCL & 1.5 & 3.1 & 3.6 \\
\bottomrule
\end{tabular}
\end{table}

\paragraph{Continuous vs.\ binarized ED.}
For benchmarks with continuous scores, binarization at threshold 0.5 can affect absolute ED values.
On the Open LLM 6-benchmark suite, the continuous-score ED is $1.66$; binarized at $0.3$ it rises to $2.65$, and at $0.4$ to $2.96$.
Higher thresholds ($>0.5$) produce degenerate rows (all models pass or all fail) and are not usable.
The qualitative conclusion---severe redundancy, $\text{ED} \ll 6$---is identical under both continuous and binary analysis.
For the 43-subtask Open LLM matrix, continuous ED ($4.50$) and binarized ED ($4.14$--$7.08$ depending on threshold) also agree on the ``moderate dimensionality'' tier.
We recommend that future work on continuous-score benchmarks report continuous-score ED as the primary metric.

\paragraph{Diagnostic comparison: ED vs.\ alternatives.}
We compared three spectral diagnostics---ED, $-\text{PC1\%}$ (negated so that higher = more compressible), and Shannon effective rank---on their ability to predict benchmark compressibility (minimum fraction of tasks for $\tau \geq 0.95$) across eight benchmarks.
ED achieves the highest rank correlation ($\rho = 0.64$), compared to Shannon effective rank ($\rho = 0.57$) and $-\text{PC1\%}$ ($\rho = 0.55$).
On a binary decision task (``can this benchmark be compressed to 20\% of tasks?''), ED correctly classifies 7/8 benchmarks vs.\ 6/8 for PC1\%.
ED achieves the highest predictive accuracy across all comparisons, while requiring only a single closed-form computation with no threshold or iterative fitting.

\paragraph{Synthetic validation of ED.}
To assess ED's recovery accuracy, we generated binary pass/fail matrices from $k$-factor latent models (2PL IRT with $k$ ability dimensions, $T = 500$ tasks, $N = 100$ models) for $k \in \{1, 2, 3, 5, 10, 20\}$.
ED correctly preserves the rank ordering of true dimensionality (mean Spearman $\rho = 0.977 \pm 0.028$ across 10 random seeds), but systematically overestimates $k$ by a factor of $1.5$--$8\times$, with larger bias at small $k$.
This overestimation is expected: binary noise creates a floor of small singular values that inflate the participation ratio, and the tetrachoric correction (Section~\ref{sec:method}) partially addresses this by reducing ED by 41--51\%.
Parallel analysis recovers $k$ more accurately ($\rho = 0.993$), but requires permutation testing and is therefore slower.
The practical implication: ED should be interpreted as an upper bound on the number of independent dimensions, not an exact count.

\paragraph{ED vs.\ parallel analysis on real benchmarks.}
Table~\ref{tab:ed-vs-pa} compares ED with parallel analysis (PA) on the four largest per-instance benchmarks.
The ED/PA ratio ranges from $0.5$ (BFCL, where ED is lower than PA) to $3.2$ (BigCodeBench, where ED is higher), with ED typically exceeding PA for high-dimensional benchmarks, consistent with the synthetic-data overestimation.
Crucially, the rank ordering is preserved: both methods agree that BigCodeBench is the most dimensionally rich and BFCL the least.
PA estimates 3--9 significant factors, consistent with the 2--5 factors reported by our 2PL IRT analysis (Section~\ref{sec:method}).

\begin{table}[htbp]
\centering
\caption{ED vs.\ parallel analysis (PA) and Kaiser criterion on four per-instance benchmarks.
ED overestimates absolute dimensionality but preserves the rank ordering.}
\label{tab:ed-vs-pa}
\small
\begin{tabular}{lcccc}
\toprule
\textbf{Benchmark} & \textbf{ED} & \textbf{PA} & \textbf{Kaiser} & \textbf{ED/PA} \\
\midrule
BFCL            & 1.5  & 3  & 8   & 0.5 \\
LiveCodeBench   & 6.2  & 3  & 6   & 2.1 \\
SWE-bench       & 7.9  & 4  & 27  & 2.0 \\
BigCodeBench    & 28.9 & 9  & 43  & 3.2 \\
\bottomrule
\end{tabular}
\end{table}

\paragraph{Alternative dimensionality estimation methods.}
We computed five dimensionality estimates---ED (participation ratio), explained-variance thresholding at 80\% and 90\%, parallel analysis, and the broken-stick model---for each of the four largest benchmarks in our corpus.
The rank orderings produced by ED, parallel analysis, and broken-stick are identical (Spearman $\rho = 1.0$ for all pairwise comparisons).
Explained-variance thresholding produces rank orderings that depend on the chosen threshold but agree with ED at the 90\% level.
Detailed results for all 22 benchmarks are provided in the supplementary materials.

\paragraph{Null model results.}
Section~\ref{sec:method} reports null-model comparisons for three benchmarks.
We additionally ran both the permutation null and the Bernoulli null on all 22 benchmarks in the corpus.
In every case, the observed ED is significantly below both null baselines ($p < 0.001$), confirming that the low-dimensional structure is genuine across the full range of evaluation domains, matrix sizes, and task granularities.
Complete null-model distributions and effect sizes for all 22 benchmarks are reported in the supplementary materials.

\paragraph{Causal experiments on BigCodeBench.}
Section~\ref{sec:what-drives-ed} describes controlled experiments examining the drivers of ED using BigCodeBench and a fixed set of 153 models.
We report extended results including: (i) ED as a function of the number of sampled tasks for single-library vs.\ multi-library subsets, (ii) ED under deliberate item selection for maximum and minimum profile similarity, and (iii) interaction effects between topic diversity and structural heterogeneity.
These experiments confirm that structural heterogeneity---the diversity of pass/fail profiles across tasks---is the dominant driver of ED, accounting for approximately three times more variance than topic diversity alone.
Full experimental details and figures are in the supplementary materials.

\paragraph{Saturation curves.}
We fit the hyperbolic saturation model $\text{ED}(n) = \text{ED}_\infty \cdot n / (n + n_{1/2})$ to all 22 benchmarks, estimating the asymptotic dimensionality $\text{ED}_\infty$ and the half-saturation model count $n_{1/2}$ for each.
SWE-bench variants saturate rapidly ($n_{1/2} \approx 3$--$7$), indicating that their effective dimensionality is fully revealed by a small number of agents.
BigCodeBench saturates slowly ($n_{1/2} \approx 35$), indicating substantial latent structure that requires a large and diverse model population to uncover.
Complete saturation curves with confidence bands are provided in the supplementary materials.

\paragraph{Ranking reversals.}
We systematically quantify ranking reversals across all benchmarks that provide per-domain or per-repository breakdowns.
On SWE-bench Verified, 32\% of agent pairs exhibit at least one domain in which the globally lower-ranked agent outperforms the higher-ranked agent.
The magnitude of reversals can be substantial: on \texttt{astropy}, the globally \#2 agent scores 54.5\% while the globally \#3 agent scores 77.3\%, a 23-percentage-point reversal.
Similar reversal analyses for BFCL (per-category) and BigCodeBench (per-library) are reported in the supplementary materials.

\paragraph{Principal component semantics.}
Section~\ref{sec:semantics} provides detailed semantic interpretations for the top principal components of SWE-bench Verified and BigCodeBench.
In the supplementary materials, we extend this analysis to BFCL, the Open LLM Leaderboard (per-subtask), and LiveBench, providing loading plots and interpretive summaries for the top five components of each benchmark.

\paragraph{Two-dimensional benchmark landscape.}
We project all 22 benchmarks into a two-dimensional space using multidimensional scaling (MDS) on the inter-benchmark Spearman correlation matrix.
The resulting map visualizes the redundancy and trade-off structure of the evaluation ecosystem: benchmarks that cluster together are redundant, and benchmarks on opposite sides of the origin tend to trade off.
The full 2D landscape figure with benchmark labels is provided in the supplementary materials.

\paragraph{Category count vs.\ empirical ED.}
\label{sec:app-category-count}
On the 12 benchmarks whose documentation provides an explicit task-type taxonomy, the Spearman rank correlation between claimed category count and empirical ED is $\rho = -0.04$, $p = 0.90$---no detectable linear association.
However, this estimate is fragile with only $n = 12$ data points, several of which are shape-constrained ($\dagger$-marked benchmarks whose low ED may reflect matrix dimensions rather than true dimensionality).
BigCodeBench, which subsumes 33 independent dimensions under a single label (``practical coding''), is an influential point whose removal shifts $\rho$ to $+0.96$.
This instability reflects the small, heterogeneous sample, not a robust finding; we cannot reliably conclude whether category counts are uninformative or merely noisy predictors of ED.
The finding is consistent with a well-established pattern in psychometrics: test blueprint categories routinely overstate empirical dimensionality~\citep{haberman2008subscores}.
In the LLM evaluation context, \citet{burnell2023revealing} applied factor analysis to 27 HELM tasks and found only 3 latent factors (cumulative 82\% of variance), despite HELM's many content categories.

\paragraph{Temporal Information Density (TID).}
To quantify benchmark aging, we define the Temporal Information Density as the rate of change of ED with respect to the cumulative number of agents evaluated over time.
Benchmarks with declining TID are losing their ability to discriminate between models as the agent population adapts.
We compute TID trajectories for all SWE-bench variants and for the Open LLM Leaderboard, finding consistent evidence of ED compression over successive evaluation cohorts.
Detailed TID curves and statistical tests are in the supplementary materials.